\theoremstyle{plain}
\def\xx{\mathbf{x}}
\def\yy{\mathbf{y}}
\def\cD{\mathcal{D}}
\DeclareMathOperator*{\argmin}{arg\,min}
\providecommand{\norm}[1]{\left\lVert#1\right\rVert}
\providecommand{\lin}[1]{\ensuremath{\left\langle #1 \right\rangle}}
\providecommand{\EE}[2]{{\mathbb E}_{#1}\left.#2\right. }
\def\R{\mathbb{R}}
\def\E{\mathbb{E}}
\def\I{{\mathcal I}}
\def\F{{\mathcal F}}
\def\O{\mathcal O}
\def\bbx{\bar{\mathbf x}}
\def\bbg{\bar{\mathbf g}}
\def\1{{\bf 1}}
\def\bx{{\mathbf x}}
\def\bz{{\mathbf z}}
\def\bg{{\mathbf g}}
\def\by{{\mathbf y}}
\newtheorem{assumption}{Assumption}
\newtheorem{definition}{Definition}
\newtheorem{lemma}{Lemma}
\newtheorem{theorem}{Theorem}
\newtheorem{example}{Example}
\newtheorem{corollary}{Corollary}
\newtheorem{remark}{Remark}
\title{The Role of Local Steps in Local SGD}
\author{
	\name{Tiancheng Qin\textsuperscript{a}\thanks{CONTACT Tiancheng Qin. Email: tq6@illinois.edu}, S. Rasoul Etesami\textsuperscript{a}, C\'esar A. Uribe\textsuperscript{b}}
	\affil{\textsuperscript{a}Department of Industrial and Enterprise Systems Engineering, Coordinated Science Laboratory, University of Illinois at Urbana-Champaign, Urbana, US; \textsuperscript{b}Department of Electrical and Computer Engineering, Rice University, Houston, US}
}
\begin{document}

	\maketitle

	\begin{abstract}
		We consider the distributed stochastic optimization problem where $n$ agents want to minimize a global function given by the sum of agents' local functions and focus on the heterogeneous setting when agents' local functions are defined over non-i.i.d. datasets. We study the Local SGD method, where agents perform a number of local stochastic gradient steps and occasionally communicate with a central node to improve their local optimization tasks. We analyze the effect of local steps on the convergence rate and the communication complexity of Local SGD. In particular, instead of assuming a fixed number of local steps across all communication rounds, we allow the number of local steps during the $i$-th communication round, $H_i$, to be different and arbitrary numbers. Our main contribution is to characterize the convergence rate of Local SGD as a function of $\{H_i\}_{i=1}^R$ under various settings of strongly convex, convex, and nonconvex local functions, where $R$ is the total number of communication rounds. Based on this characterization, we provide sufficient conditions on the sequence $\{H_i\}_{i=1}^R$ such that Local SGD can achieve linear speedup with respect to the number of workers.
		Furthermore, we propose a new communication strategy with increasing local steps superior to existing communication strategies for strongly convex local functions. On the other hand, for convex and nonconvex local functions, we argue that fixed local steps are the best communication strategy for Local SGD and recover state-of-the-art convergence rate results. Finally, we justify our theoretical results through extensive numerical experiments.
	\end{abstract}
	
	\begin{keywords}
		Federated Learning, Local SGD, Distributed Optimization
	\end{keywords}
	
	\section{Introduction}

	Stochastic Gradient Descent (SGD) is one of the most commonly used algorithms for parameter optimization of machine learning models. SGD tries to minimize a function $f$ by iteratively updating parameters as: $\mathbf{x}^{t+1}  = \mathbf{x}^{t} - \eta_t \hat{\mathbf{g}}^{t}$, where $\hat{\mathbf{g}}^{t}$ is a stochastic gradient of $f$ at $ \mathbf{x}^{t}$ and $\eta_t$ is the learning rate. However, given the massive scale of many modern ML models and datasets, and taking into account data ownership, privacy, fault tolerance, and scalability, distributed training approaches have recently emerged as a suitable alternative over centralized ones, e.g., parameter server~\cite{dean2012large}, federated learning~\cite{konevcny2016federated,mcmahan2016federated,kairouz2019advances,rieke2020future}, decentralized stochastic gradient descent ~\cite{lian2017can,tang2018d,koloskova2019decentralized,assran2018asynchronous}, decentralized momentum SGD~\cite{yu2019linear}, decentralized ADAM~\cite{nazari2019dadam}, among others~\cite{lu2020moniqua,tang2019doublesqueeze,bertsekas2015parallel}.

	A naive distributed generalization of SGD consists of having multiple agents computing stochastic gradients distributedly, with a central node or fusion center, where local gradients are aggregated and sent back to the agents at every iteration. However, communicating at each iteration induces a large communication overhead, where at each iteration of the algorithm, all agents need to send their gradients to the central node. Then the central node needs to send the agents the aggregated information. Local SGD (also known as Federated Averaging) presents a suitable solution to the problem~\cite{mcmahan2017communication,stich2018local,zinkevich2010parallelized,wang2018cooperative,koloskova2020unified,lu2020mixml}. Specifically, in Local SGD, each agent independently runs SGD locally for a number of steps and then aggregates by a central node from time to time only. The main advantage of Local SGD is that multiple local updates would likely move the model parameters much faster to the optimal solution in each communication round, thus effectively reducing the communication overhead at the cost of more local computations.
	
	On the other hand, it remains a delicate problem to choose the number of local steps during each communication round in Local SGD, as too few local steps would result in poor communication efficiency, while too many local steps would lead to slow convergence or even non-convergence of the algorithm. The problem is further complicated by the various scenarios the algorithm is facing, including different types of local objective functions, $i.e.$, strongly convex, general convex or nonconvex functions, as well as whether all agents have the same objective function (the homogeneous case)~\cite{khaled2019tighter,stich2019error,spiridonoff2020local} or different objective functions (the heterogeneous case)~\cite{gorbunov2021local,karimireddy2020scaffold,khaled2020tighter,qu2020federated,woodworth2020minibatch}. In this paper we focus on the more general heterogeneous case and study strongly convex, general convex and nonconvex local functions respectively.
	
	\subsection{Related Work} 
	
	For the case of homogeneous local functions, i.e., when all agents have the same objective function, it was shown in~\cite{khaled2019tighter,stich2019error} that using $\O(n\text{ polylog}(T))$ communication rounds, one can achieve convergence rate $\O(\frac{1}{nT})$ for Local SGD with strongly convex functions, where $n$ is the number of agents and $T$ is the number of iterations (or local gradient steps).

	A number of recent works have focused on the convergence analysis of Local SGD in heterogeneous setting~\cite{gorbunov2021local,karimireddy2020scaffold,khaled2020tighter,qu2020federated,woodworth2020minibatch}. It is shown that $\O(\frac{1}{nT})$ is both a lower and upper bound for the convergence rate of Local SGD for strongly convex objective functions~\cite{karimireddy2020scaffold,qu2020federated}. Moreover, it is known that $\O(\frac{1}{\sqrt{nT}})$ is both a lower and upper bound for the convergence rate of Local SGD for general convex and nonconvex objective functions~\cite{woodworth2020minibatch,khaled2020tighter}. These two convergence rates are often referred to as \emph{linear speedup with respect to the number of agents $N$} for strongly convex and convex/nonconvex objective functions, respectively. The name \emph{linear speedup} comes from the implication that with $N$ agents, the algorithm converges $N$ times faster than with just $1$ agent~\cite{qu2020federated}. Furthermore, for general convex and nonconvex local functions it is shown that Local SGD can achieve linear speedup with $\O(n^{\frac{3}{4}}T^{\frac{3}{4}})$ communication rounds~\cite{karimireddy2020scaffold,khaled2020tighter}. For strongly convex local functions, the results in~\cite{koloskova2020unified} implies that Local SGD can achieve linear speedup with $\O(\sqrt{nT}\text{ polylog}(nT))$ communication rounds without the bounded gradient assumption;~\cite{qu2020federated} showed that linear speedup can be achieved with $\O(\sqrt{nT})$ communication rounds, however, their analysis requires the bounded gradient assumption, which is unrealistic in certain cases (see, e.g.~\cite{khaled2020tighter}).

	On the other hand, while most of the works mentioned above assume a fixed number of local steps across all communication rounds, several recent works have proposed different communication strategies for Local SGD to reduce communication costs further. Specifically, in the homogeneous setting,~\cite{wang2019adaptive} proposed an adaptive communication strategy that gradually increases communication frequency for training neural networks.~\cite{haddadpour2019local} analyzed loss functions that satisfy the Polyak-Łojasiewicz condition and proposed decreasing communication frequency. Recently,~\cite{spiridonoff2021communication} proposed a linearly increasing number of local steps for strongly convex objective functions and theoretically showed its better communication efficiency. This result has been further generalized in~\cite{qin2021communication} to the network settings. In the heterogeneous setting,~\cite{lin2018don} proposed decreasing communication frequency such that a number of fully synchronized SGD steps are performed, followed by Local SGD with a fixed number of local steps. On the contrary,~\cite{li2019communication} proposed increasing communication frequency such that the number of local steps decreases exponentially until it reaches unit local steps.
	
	\subsection{Contributions and Organization}
	
	In this paper, we study the role of local steps in Local SGD in a heterogeneous setting. In particular, we allow the number of local steps during the $i$-th communication round, $H_i$, to be different integer numbers, and characterize the convergence rate of Local SGD with respect to the sequence $\{H_i\}_{i=1}^R$, where $R$ is the total number of communication rounds. Such a characterization enables us to study the convergence rate of Local SGD for any general communication pattern. We summarize our contributions as follows:
	
	\begin{itemize}
		\item We characterize the convergence rate of Local SGD explicitly as a function of $\{H_i\}_{i=1}^R$ under various settings of strongly convex, convex, and nonconvex local functions.
		\item We provide sufficient conditions on the sequence $\{H_i\}_{i=1}^R$ such that Local SGD can achieve linear speedup with respect to the number of agents, i.e., $\O(\frac{1}{nT})$ convergence rate for strongly convex local functions and $\O(\frac{1}{\sqrt{nT}})$ convergence rate for general convex or nonconvex local functions, that covers broad classes of communication strategies.
		\item For strongly convex local functions, we propose a new communication strategy for the Local SGD with an increasing number of local steps and show it can achieve linear speedup convergence rate with $\O(\sqrt{nT})$ communication rounds without any assumption on the boundedness of the gradients. To our knowledge, this is the first result of the linear speedup of Local SGD with $\O(\sqrt{nT})$ communication rounds that do not require the bounded gradient assumption. We also validate the superiority of the communication strategy through numerical experiments.
		\item Based on our convergence rate characterization, we argue that using fixed local steps is the best communication strategy for Local SGD in the case of convex and nonconvex local functions. Our results imply that  Local SGD can achieve a linear speedup convergence rate with $\O(n^{\frac{3}{4}}T^{\frac{3}{4}})$ communication rounds, which matches the best-known results in this setting~\cite{karimireddy2020scaffold,khaled2020tighter}. Moreover, we show through numerical experiments that this bound on the number of communication rounds to achieve linear speedup is almost tight.
	\end{itemize}

	The paper is organized as follows. Section~\ref{sec:problem} describes the problem statement. Section~\ref{sec:convex-results} states our main results for the case of strongly convex and convex objective functions. Section \ref{sec:nonconvex} extends our convergence rate analysis to the case of nonconvex functions. Simulation results are given in Section \ref{sec:numerical}, followed by conclusions and future directions in Section~\ref{sec:conclusion}. For ease of presentation, all the proof details are deferred to the supplementary materials.

	\section{Problem Formulation}\label{sec:problem}

	We consider the distributed stochastic optimization problem with a set of $[n]=\{1, \ldots, n\}$ agents, where each agent $i\in [n]$ holds a local objective function $f_i \colon \R^d \to \R$ that can be expressed in a stochastic form 
	\begin{align}\label{eq:f_i}
		f_i(\xx)= \E_{\xi_i \sim \cD_i} F_i(\xx,\xi_i).
	\end{align}
	Here, $\xx\in \mathbb{R}^d$ is the optimization variable, and $\cD_i$ denotes the distribution of random variable $\xi_i$ over the parameter sample space~$\Omega_i$ for agent $i$. The agents' goal is to minimize the global objective function $f \colon \R^d \to \R$ given by the average sum of all the local functions or, equivalently, solve the following unconstrained optimization problem 
	\begin{align}\label{eq:f}
		f^\star := \min_{\xx \in \R^d} \big\{  f(\xx)=\frac{1}{n} \sum_{i=1}^n f_i(\xx)\big\}, 
	\end{align}
	by performing local gradient steps and occasionally communicating with a central node to leverage the samples obtained by the other agents.

	We assume throughout the paper that $f(\xx)$ is bounded below by $f^\star$ (i.e., a global minimum exists), $f_i(\xx,\xi_i)$ is $L$-smooth for every $i\in [n]$, and $\nabla F_i(\xx,\xi_i)$ is an unbiased stochastic gradient of $f_i(\xx)$, which by now are standard assumptions in the context of federated learning~\cite{khaled2020tighter,karimireddy2020scaffold}.  Moreover, for some of our results, we will require functions $f_i$ to be $\mu$-strongly convex with respect to the parameter $\xx$ as defined next.
	\begin{assumption}\label{a:strong}
		We say $f_i \colon \R^d \to \R$ is $\mu$-(strongly) convex for some $\mu \geq 0$ if for all $\xx,\yy \in \R^d$, we have
		\begin{align}\nonumber
			f_i(\xx)-f_i(\yy) + \frac{\mu}{2}\norm{\xx-\yy}^2_2 \leq \lin{\nabla f_i(\xx),\xx-\yy}. 
		\end{align}
		If $\mu=0$, then $f_i$ is convex but not strongly-convex.
	\end{assumption}
	
	Next, as in~\cite{khaled2020tighter}, we consider the following definition, which allows us to measure the heterogeneity among local functions.
	\begin{definition}\label{eq:var_opt}
		Assume~\eqref{eq:f} admits a unique optimal solution $\xx^\star = \argmin f(\xx)$. We define
		\begin{align}\nonumber
			\bar{\sigma}^2= \frac{1}{n}\sum_{i = 1}^n\EE{\xi_i}{\norm{\nabla F_i(\xx^*, \xi_i) }^2_2}.
		\end{align}
	\end{definition}
	It follows that for all non-degenerate sampling distribution $\cD_i$, $\bar{\sigma}^2$ is well-defined and finite and serves as a natural measure of variance in local methods. However, for nonconvex objective functions where a unique $\bx^*$ may not exist, as in~\cite{karimireddy2020scaffold}, we consider the following assumption of bounded gradient dissimilarity.
	\begin{assumption}\label{a:nonconvex}(bounded gradient dissimilarity)
		We say that the local functions $f_i$ satisfy $(G,B)$-bounded gradient dissimilarity (or for short $(G,B)$-\boldmath{$BGD$} \unboldmath) if there exist constants $G\ge0$ and $B\ge1$ such that
		\begin{align}\nonumber
			\frac{1}{n}\sum_{i = 1}^n\|\nabla f_i(\bx)\|^2\le G^2+B^2\|\nabla f(\bx)\|^2,\ \forall \bx.
		\end{align}
		We also assume $\nabla F_i(\xx,\xi_i)$ is an unbiased stochastic gradient of $f_i(\xx)$ with variance bounded by $\sigma^2$.
	\end{assumption}

	\subsection{Local Stochastic Gradient Descent}
	
	A popular method for solving~\eqref{eq:f} in a distributed manner is the local stochastic gradient descent (Local SGD) method. In Local SGD, each agent performs local gradient steps, and a central node will compute the average of all agents' iterates every once in a while to guide agents' iterates toward consensus. Let us denote the total number of iterations in Local SGD by $T$ and the set of communication instances by $\I \subseteq [T]$. Then, in every iteration $t\in [T]$ of the Local SGD i) each agent $i\in [n]$ performs stochastic gradient descent update on its local objective function, and ii) if $t$ is a communication time, i.e., $t\in \I$, each agent $i\in [n]$ sends its current local solution $\xx_i^{(t)}$ to the central node and receives the average of all agents' local solutions. The pseudo-code for the Local SGD algorithm is summarized in Algorithm \ref{alg1}. 
	
	\begin{algorithm}[t]
		\caption{Local SGD}
		\begin{algorithmic}[1]\label{alg1}
			\STATE Input $\bx_i^{(0)} = \bx^{(0)}$ for $i \in [n]$, total number of iterations $T$, the step-size sequence $\{\eta_t\}_{t=0}^{T-1}$, the set of communication time instances $\I = \{\tau_{i}\}_{i=0}^R$.
			\FOR{$t=0,\ldots,T-1$}
			\FOR{$i=1,\ldots,n$}
			\STATE Sample $\xi_i^{(t)}$, compute $\bg_i^t=\nabla F_i(\xx_i^{(t)}\!, \xi_i^{(t)})$
			\IF{$t+1\in \I$}
			\STATE $\xx_i^{(t+1)} = \frac{1}{n}\sum_{j = 1}^n(\xx_j^{(t)}-\eta_t\bg_i^t)$
			\ELSE
			\STATE $\xx_i^{(t+1)} = \xx_i^{(t)} - \eta_t \bg_i^t$
			\ENDIF
			\ENDFOR
			\ENDFOR
		\end{algorithmic}
	\end{algorithm}
	
	Finally, we consider the following definition of communication intervals in the Local SGD.
	
	\begin{definition}
		Given communication time instances $\I =\{\tau_i\} _{i=1}^R$, we let $H_i=\tau_{i}-\tau_{i-1}$ be the length of the $i$-th communication interval, i.e., the number of local steps between the $(i-1)$-th and $i$-th communications. Moreover, for any time instance $t\in [\tau_i, \tau_{i+1})$, we define $k(t)=i$. In other words, $k(t)$ is the index such that $\tau_{k(t)}\le t<\tau_{k(t)+1}$.
	\end{definition}  
	
	Our main objective in this work is to characterize the convergence rate of Algorithm~\ref{alg1} with respect to the sequence of the local steps $\{H_i\}_{i=1}^R$ as defined above, when applied to the optimization problem~\eqref{eq:f}.
	

	\section{Convergence Results for Local SGD}\label{sec:convex-results}
	
	In this section, we state our main result for the case of strongly convex and convex functions. To that end, let $\bbx^{(t)}$ and $\bbg^{(t)}$ be the average of agents' iterates and the average of their stochastic gradients at time $t$, respectively, i.e.,  
	\begin{align*}
		\bbx^{(t)}=\frac{1}{n}\sum_{i = 1}^n \bx_i^{(t)},\ \ \ \bbg^{(t)}=\frac{1}{n}\sum_{i = 1}^n \nabla F_i(\bx_i^{(t)},\xi_i^{(t)}).
	\end{align*}
	Moreover, define the following parameters
	\begin{align}\nonumber
		r_t = \E\|\bbx^{(t)} -\bx^*\|^2,\quad
		V_t = \frac{1}{n}\E \sum_{i=1}^n\|\xx_i^{(t)}-\bbx^{(t)}\|^2,\quad
		e_t = \E [f(\bbx^{(t)})]-f(\bx^*),
	\end{align}
	which represent the expected distance of the averaged iterates at time $t$ to the optimum solution, the expected consensus error among agents at time $t$, and the expected optimality gap at time $t$.
	
	\subsection{Convergence Result for Strongly Convex Functions}

	\begin{theorem}\label{theo:stronglyconvex}
		Let Assumption \ref{a:strong} hold with $\mu>0$. Then, the sequence generated by Algorithm~\ref{alg1} with stepsize $\eta_t=\frac{2}{\mu(\beta+t)}$, and any sequence of communication intervals $\{H_i\}_{i=1}^R$ and parameter $\beta$ such that
		\begin{align}\nonumber
			H_i\le \frac{\mu (\beta + \sum_{j = 1}^{i-1}H_j)}{12L}\ \forall i,
		\end{align}	
		has the following property:
		\begin{align}\label{eq:sronglyconvex}
			r_T\!\leq\! \frac{(\beta\!-\!1)^2}{T^2}r_0\!+\!\frac{12\bar{\sigma}^2}{n\mu^2T}\!+\!\frac{144L\bar{\sigma}^2}{\mu^3T^2}\sum_{i=1}^{R}\frac{H^3_{i}}{\sum_{j=1}^{i-1}H_j+\beta}.
		\end{align}
		where $L$ is the smoothness constant, $R$ is the number of communication rounds, and $\beta$ is a constant that can be tuned by the Local SGD algorithm to balance the first and third term in~\eqref{eq:sronglyconvex}. 
	\end{theorem}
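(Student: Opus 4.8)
The plan is to follow the standard ``error-feedback'' style perturbed-iterate analysis for Local SGD, but tracking the consensus error carefully as a function of the local step counts $H_i$ rather than a single constant $H$. First I would establish a one-step recursion for $r_t = \E\|\bbx^{(t)}-\bx^*\|^2$. Since the average iterate evolves as $\bbx^{(t+1)}=\bbx^{(t)}-\eta_t \bbg^{(t)}$ regardless of whether $t+1$ is a communication time (averaging is linear and the inputs already agree in expectation of the update), I would expand $\|\bbx^{(t+1)}-\bx^*\|^2$, take expectations, use unbiasedness of the stochastic gradients, $\mu$-strong convexity (Assumption~\ref{a:strong}) and $L$-smoothness to bound $\lin{\nabla f(\bbx^{(t)}),\bbx^{(t)}-\bx^*}$ from below, and absorb the gradient-variance term using the definition of $\bar\sigma^2$. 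The outcome should be something like
\begin{align}\nonumber
r_{t+1}\le \Bigl(1-\tfrac{\mu\eta_t}{2}\Bigr) r_t - \eta_t e_t + c_1 \eta_t^2\bar\sigma^2/n + c_2 L\eta_t V_t,
\end{align}
where the $V_t$ term is exactly the price paid for agents not being synchronized, and $c_1,c_2$ are absolute constants. The smoothness/strong-convexity bookkeeping here is routine; I would be careful that the $e_t$ term has a strictly negative coefficient so it can be dropped (we only want $r_T$, not a function-value bound).

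The crux is bounding the consensus error $V_t$. Within the $i$-th communication interval $[\tau_{i-1},\tau_i)$, each agent drifts from the common starting point by at most $\sum$ of its step sizes times gradient norms; I would write $\xx_i^{(t)}-\bbx^{(t)}$ as a telescoped sum of $\eta_s(\bg_i^s-\bbg^{(s)})$ over $s$ from $\tau_{i-1}$ to $t-1$, square, use the bound on the number of terms ($\le H_i$), and bound each stochastic gradient in terms of $\nabla f_i$ at the current iterate, then peel that back to $\nabla f_i(\bx^\star)$ using $L$-smoothness, picking up $\bar\sigma^2$ and $r_t$-type terms. Because $\eta_s=\tfrac{2}{\mu(\beta+s)}$ is roughly constant over an interval that is short relative to $\beta+\tau_{i-1}$ — this is precisely what the hypothesis $H_i\le \tfrac{\mu(\beta+\sum_{j<i}H_j)}{12L}$ buys us — I can replace all $\eta_s$ in the interval by $\eta_{\tau_{i-1}}\asymp \tfrac{2}{\mu(\beta+\sum_{j<i}H_j)}$ up to a factor of $2$, and the $12L$ factor makes the $L$-contraction terms small enough to be absorbed back into the main recursion. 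This should yield $V_t \lesssim L^2\eta_{\tau_{i-1}}^2 H_i^2 \cdot(\text{something})+\eta_{\tau_{i-1}}^2 H_i^2\bar\sigma^2$, and summing the first piece against the recursion is what forces the stepsize condition. The main obstacle, and where I expect to spend the most care, is exactly this coupling: closing the loop between the $V_t$ bound (which contains $r$-type quantities through the gradient norms) and the $r_t$ recursion, so that the $L\eta_t V_t$ term does not destabilize the contraction — the constant $12$ and the cubic dependence $H_i^3$ rather than $H_i^2$ in the final bound both emerge from this step (the extra $H_i$ comes from summing $\eta_{\tau_{i-1}}\cdot(\text{per-iteration }V_t)$ over the $H_i$ iterations of the interval).

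Having both recursions, I would combine them into a clean single recursion $r_{t+1}\le(1-\tfrac{\mu\eta_t}{2})r_t + A_t$ with $A_t$ collecting the $\bar\sigma^2/n$ variance term and the consensus-error contribution. Then I would unroll it with the specific choice $\eta_t=\tfrac{2}{\mu(\beta+t)}$, for which $1-\tfrac{\mu\eta_t}{2}=\tfrac{\beta+t-1}{\beta+t}$, so the product of contraction factors from $s$ to $T-1$ telescopes to $\tfrac{(\beta+s-1)(\beta+s)}{(\beta+T-1)(\beta+T)}\le \tfrac{(\beta+s)^2}{T^2}$ (absorbing constants). This converts the recursion into $r_T \le \tfrac{(\beta-1)^2}{T^2} r_0 + \tfrac{1}{T^2}\sum_{t} (\beta+t)^2 A_t$; the $\bar\sigma^2/n$ part of $A_t\asymp \tfrac{\bar\sigma^2}{n\mu^2(\beta+t)^2}$ sums to the $O(\tfrac{\bar\sigma^2}{n\mu^2 T})$ middle term, and the consensus part, being constant across the $H_i$ iterations of round $i$ and of size $\asymp \tfrac{L\bar\sigma^2 H_i^2}{\mu^3(\beta+\sum_{j<i}H_j)^3}\cdot(\beta+\sum_{j<i}H_j)^2$ after multiplying by $(\beta+t)^2\asymp(\beta+\sum_{j<i}H_j)^2$ and summing over the $H_i$ iterations, produces $\tfrac{L\bar\sigma^2}{\mu^3 T^2}\sum_i \tfrac{H_i^3}{\sum_{j<i}H_j+\beta}$, matching the third term of~\eqref{eq:sronglyconvex} up to the stated constant $144$. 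The remaining work is just tracking these absolute constants through, which I would relegate to the supplementary material as the paper indicates.
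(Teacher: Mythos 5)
Your overall architecture is the same as the paper's: a perturbed-iterate descent recursion for $r_t$ with a $-\eta_t e_t$ slack term and a $V_t$ penalty, a per-interval consensus-error bound, a $(t+\beta)^2$-weighted telescoping, and absorption of the consensus contribution into the negative $e_t$ term using the hypothesis on $H_i$ (both of your stated uses of that hypothesis --- the near-constancy of the weights $t+\beta$ over an interval, and the final absorption --- are exactly how the paper uses it). However, two concrete steps in your sketch would not deliver the theorem as stated. First, the contraction factor: with the prescribed stepsize $\eta_t=\frac{2}{\mu(\beta+t)}$ your recursion $r_{t+1}\le(1-\tfrac{\mu\eta_t}{2})r_t+A_t$ has $1-\tfrac{\mu\eta_t}{2}=\tfrac{\beta+t-1}{\beta+t}$, whose product from $s$ to $T-1$ telescopes to the \emph{single} ratio $\tfrac{\beta+s-1}{\beta+T-1}$, not the squared ratio you wrote; this yields only $O(r_0/T)$ for the initial error and an extra $\log T$ in the variance term. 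The paper's descent lemma keeps the full $(1-\mu\eta_t)$ contraction (paying instead by weakening the function-value coefficient from $-2\eta_t e_t$ to $-\eta_t e_t$), and the identity $(t+\beta)^2(1-\mu\eta_t)=(t+\beta)(t+\beta-2)\le(t+\beta-1)^2$ is precisely what makes the squared-weight telescoping close.

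Second, in the consensus bound you propose to peel the local stochastic gradients back to $\nabla F_i(\bx^\star,\xi_i)$ ``using $L$-smoothness, picking up $\bar\sigma^2$ and $r_t$-type terms.'' The paper instead uses the cocoercivity-type inequality $\|\nabla f(\bx)-\nabla f(\by)\|^2\le 2L\bigl(f(\by)-f(\bx)-\lin{\nabla f(\bx),\by-\bx}\bigr)$ so that the drift is controlled by $e_j$ (function-value gaps), not $r_j$; this matters because the master recursion has spare negative $e_t$ terms but no spare negative $r_t$ terms, and converting $r_j\le\tfrac{2}{\mu}e_j$ inflates the consensus coefficient by a factor of order $L/\mu$, which would force a condition on $H_i$ strictly stronger than the stated $H_i\le\frac{\mu(\beta+\sum_{j<i}H_j)}{12L}$. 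Relatedly, the per-agent gradient norms are themselves bounded by an expression containing $V_t$, so the consensus recursion is self-referential and must be unrolled within each interval using $\eta_j\le\frac{1}{3LH_{k(t)+1}}$ and $(1+\frac{1}{3H})^{H}\le 2$; you gesture at this coupling but it is worth making explicit, since it is where the stepsize condition first enters.
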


	An immediate corollary of Theorem \ref{theo:stronglyconvex} is the set of sufficient conditions on the sequence $\{H_i\}_{i=1}^R$ that leads to linear speedup in the convergence of Algorithm \ref{alg1}. 
	
	\begin{corollary}\label{corr:strongly-convex}
		Assume that $T\ge N$. Let the sequence of local steps $\{H_i\}_{i=1}^R$ have the following properties:
		\begin{align*}
			(1) H_i\le \frac{\mu (\beta + \sum_{j = 1}^{i-1}H_j)}{12L},\ \forall i\quad
			(2)\sum_{i=1}^R H_i=T\quad
			(3)\sum_{i = 1}^R \frac{H_i^3}{\sum_{j = 1}^{i-1}H_i+\beta}=\mathcal{O}(\frac{T}{n}).
		\end{align*}
		Then, the sequence generated by in Algorithm~\ref{alg1} has the following property $r_T=\mathcal{O}(\frac{1}{nT})$. 
	\end{corollary}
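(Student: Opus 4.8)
The plan is to derive Corollary~\ref{corr:strongly-convex} directly from Theorem~\ref{theo:stronglyconvex}: substitute the three stated hypotheses into the bound~\eqref{eq:sronglyconvex} and show that each of its three terms is $\mathcal{O}(1/(nT))$.

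First I would note that hypothesis (1) is precisely the admissibility requirement that Theorem~\ref{theo:stronglyconvex} imposes on the sequence $\{H_i\}_{i=1}^R$ together with the parameter $\beta$, so the bound~\eqref{eq:sronglyconvex} applies verbatim with the stepsize $\eta_t=2/(\mu(\beta+t))$. Hypothesis (2), $\sum_{i=1}^R H_i = T$, guarantees $\tau_R = T$, i.e.\ that after the $R$ communication rounds the algorithm has performed exactly $T$ local iterations, so the quantity bounded by the left-hand side of~\eqref{eq:sronglyconvex} is indeed $r_T$.

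Next I would estimate the three terms on the right-hand side of~\eqref{eq:sronglyconvex} in turn. The middle term $\frac{12\bar{\sigma}^2}{n\mu^2 T}$ is already of the desired order $\mathcal{O}(1/(nT))$. For the leading term $\frac{(\beta-1)^2}{T^2}r_0$, treating $\beta$, $r_0$, $\mu$, $L$ as problem constants and using the assumption $T\ge n$ gives $\frac{1}{T^2}\le\frac{1}{nT}$, so this term is $\mathcal{O}(1/(nT))$ as well. For the last term $\frac{144L\bar{\sigma}^2}{\mu^3 T^2}\sum_{i=1}^R\frac{H_i^3}{\sum_{j=1}^{i-1}H_j+\beta}$, hypothesis (3) says the sum equals $\mathcal{O}(T/n)$, whence the term is $\frac{144L\bar{\sigma}^2}{\mu^3 T^2}\cdot\mathcal{O}(T/n)=\mathcal{O}(1/(nT))$. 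Adding the three estimates yields $r_T=\mathcal{O}(1/(nT))$.

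Since every step is an elementary estimate, I do not expect a genuine obstacle; the only point needing a little care is the bookkeeping of which quantities are held fixed when collapsing the first term — in particular $\beta$, which Theorem~\ref{theo:stronglyconvex} allows to be tuned — and the observation that it is exactly the hypothesis $T\ge n$ that makes $1/T^2$ dominated by $1/(nT)$. It may also be worth a one-line remark that hypothesis (3) is precisely the condition keeping the drift contribution of the local steps at the same $1/(nT)$ order as the statistical term $\frac{12\bar{\sigma}^2}{n\mu^2 T}$, which explains why it appears in the statement in this form.
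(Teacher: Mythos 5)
Your proposal is correct and follows exactly the route the paper intends: the corollary is presented as an immediate consequence of Theorem~\ref{theo:stronglyconvex}, obtained by noting that hypothesis (1) makes the theorem applicable, hypothesis (2) makes the bound apply to $r_T$, and then bounding the three terms of~\eqref{eq:sronglyconvex} term by term using $T\ge n$ for the first and hypothesis (3) for the third. Your added remark about the bookkeeping of $\beta$ as a held-fixed constant is a reasonable point of care but does not change the argument.
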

	
	Next, we analyze two special communication strategies, one with a fixed number of local steps and the other with an increasing number of local steps.
	\begin{example}\label{exp:1}
		Consider the case when number of local steps is fixed, i.e., $H_i = \frac{T}{R}, \forall i$. In order to achieve linear speedup, condition 3) in Corollary \ref{corr:strongly-convex} is equivalent to $\frac{T^2}{R^2}\sum_{i=1}^R\frac{1}{i} =\mathcal{O}(\frac{T}{n})$, which can be translated into the number of communication rounds $R=\O(\sqrt{nT}\text{ polylog}(nT))$. This matches the result given in~\cite{koloskova2020unified}.
	\end{example}
	
	\begin{example}\label{exp:2}
		Consider the communication strategy with increasing number of local steps $H_i = \lfloor ai^s \rfloor, \forall i$, for some parameter $a>0$ and $s>0$. To achieve linear speedup, one can choose $a = \mathcal{O}(n^{-\frac{s+1}{2}}T^{\frac{1-s}{2}})$, in which case the number of communication rounds becomes $R =\mathcal{O}((\frac{T}{a})^{\frac{1}{s+1}}) = \mathcal{O}(\sqrt{nT})$. This would satisfy both conditions 2) and 3) in Corollary \ref{corr:strongly-convex}, and we can choose $\beta =  a\lceil \frac{24L}{\mu}\rceil^s\cdot \frac{12L}{\mu}+1$ in order to satisfy condition 1)\footnote{Proof in Appendix A.5}. Therefore, using Corollary \ref{corr:strongly-convex}, following this communication strategy, Local SGD can achieve linear speedup convergence rate with $\O(\sqrt{nT})$ communication rounds.
	\end{example}
	
	\begin{remark}
		\label{rem:1}
		The communication strategy with an increasing number of local steps as in Example \ref{exp:2} exhibits better communication efficiency than a fixed number of local steps. To the best of our knowledge, this is the first result for the linear speedup of Local SGD with $\O(\sqrt{nT})$ communication rounds that do not require any assumption on the boundedness of the gradients. 
	\end{remark}
	\subsection{Convergence Result for Convex Functions}
	
	In this part, we relax the assumption of strong convexity on the local function to merely convex functions and analyze the convergence rate of Algorithm \ref{alg1} in terms of the number of local steps. 
	
	\begin{theorem}
		\label{theo:convex}
		Let Assumption \ref{a:strong} be satisfied with $\mu=0$ and set a stepsize as $\eta_t =c\sqrt{\frac{n}{T}}$, $\forall c>0$. Moreover, set the communication intervals to satisfy $H_i\le \frac{1}{7L\eta}=\frac{\sqrt{T}}{7Lc\sqrt{n}},\forall i$. Thus, the iterates generated by Algorithm~\ref{alg1} have the following property:
		\begin{align}
			\label{eq:convexrate}
			\frac{1}{T}\sum_{t=0}^{T-1}e_t\le \frac{2r_0+6c^2\bar{\sigma}^2}{c\sqrt{nT}}+\frac{24L\bar{\sigma}^2c^2n}{T^2}\sum_{i = 1}^RH_i^3.
		\end{align}
	\end{theorem}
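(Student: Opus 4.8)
The plan is to control simultaneously the three coupled error quantities $r_t$ (distance of the averaged iterate to $\bx^*$), $e_t$ (optimality gap), and $V_t$ (consensus error). First I would establish a one-step ``perturbed descent'' inequality for $r_t$ in which $e_t$ enters with a negative coefficient while $V_t$ and the gradient noise enter as perturbations; then I would bound $\sum_t V_t$ in terms of $\sum_t e_t$ and $\bar\sigma^2\sum_i H_i^3$; and finally I would telescope the $r_t$-recursion and substitute, using the hypothesis $H_i\le\frac{1}{7L\eta}$ (equivalently $\eta L H_i\le\frac17$, which also gives $\eta L\le\frac17$ since $H_i\ge 1$) to absorb every feedback of $e_t$ into the left-hand side.

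\textbf{Step 1 (descent on $r_t$).} Since averaging at a communication step does not change the mean of the iterates, one always has $\bbx^{(t+1)}=\bbx^{(t)}-\eta\,\bbg^{(t)}$, regardless of whether $t+1\in\I$. Writing $\bh^{(t)}=\frac1n\sum_{i=1}^n\nabla f_i(\bx_i^{(t)})$ for the averaged full gradient at the current local iterates and using unbiasedness of the stochastic gradients,
\[
 r_{t+1}=r_t-2\eta\,\E\lin{\bh^{(t)},\bbx^{(t)}-\bx^*}+\eta^2\,\E\norm{\bbg^{(t)}}^2 .
\]
For the inner product, split $\bbx^{(t)}-\bx^*=(\bbx^{(t)}-\bx_i^{(t)})+(\bx_i^{(t)}-\bx^*)$ agentwise; convexity of $f_i$ (Assumption~\ref{a:strong} with $\mu=0$) lower-bounds the $\lin{\nabla f_i(\bx_i^{(t)}),\bx_i^{(t)}-\bx^*}$ piece by $f_i(\bx_i^{(t)})-f_i(\bx^*)$ and $L$-smoothness controls the other piece, giving $\E\lin{\bh^{(t)},\bbx^{(t)}-\bx^*}\ge e_t-\frac{L}{2}V_t$. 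For the second moment I would use a variance-transfer estimate: $\E\norm{\bbg^{(t)}}^2$ equals $\E\norm{\bh^{(t)}}^2$ plus the per-agent stochastic-gradient variances averaged with a $1/n$ factor (using independence of $\{\xi_i^{(t)}\}_i$); the former is $\O(Le_t+L^2 V_t)$ by $L$-smoothness, and each per-agent variance, since $F_i(\cdot,\xi_i)$ is convex and $L$-smooth, is at most $\O\!\big(L\,(f_i(\bx_i^{(t)})-f_i(\bx^*)-\lin{\nabla f_i(\bx^*),\bx_i^{(t)}-\bx^*})\big)+\O(\E\norm{\nabla F_i(\bx^*,\xi_i)}^2)$, with $\frac1n\sum_i(f_i(\bx_i^{(t)})-f_i(\bx^*)-\lin{\nabla f_i(\bx^*),\bx_i^{(t)}-\bx^*})\le 2e_t+LV_t$ after a short perturbation argument relating $\bx_i^{(t)}$ to $\bbx^{(t)}$. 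Altogether $\E\norm{\bbg^{(t)}}^2\le\O(Le_t+L^2 V_t)+\O(\bar\sigma^2)/n$, so using $\eta L\le\frac17$ to keep the $e_t$- and $V_t$-coefficients small yields a recursion of the form $r_{t+1}\le r_t-\eta e_t+c_1\,\eta L\,V_t+c_2\,\eta^2\bar\sigma^2/n$ with absolute constants $c_1,c_2$.

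\textbf{Step 2 (consensus error).} Fix $t$ and let $\tau$ be the start of the communication round containing it, so that all agents are synchronized at time $\tau$ and $t-\tau<H_{k(t)+1}$. Unrolling the local updates, $\bx_j^{(t)}-\bbx^{(t)}=-\eta\sum_{s=\tau}^{t-1}\big(\nabla F_j(\bx_j^{(s)},\xi_j^{(s)})-\bbg^{(s)}\big)$; applying Cauchy--Schwarz over the at most $H_{k(t)+1}$ summands, averaging over $j$, and bounding $\frac1n\E\sum_j\norm{\nabla F_j(\bx_j^{(s)},\xi_j^{(s)})}^2\le\O(L^2 V_s+Le_s+\bar\sigma^2)$ via the same variance transfer gives $V_t\le\O(\eta^2 H_{k(t)+1})\sum_{s=\tau}^{t-1}(L^2 V_s+Le_s+\bar\sigma^2)$. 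The $V_s$-feedback here is negligible because $(\eta L H_i)^2\le\frac1{49}$; taking the maximum of $V_t$ over the round, rearranging, and then summing the $H_i$ steps of that round yields $\sum_{t\in\text{round }i}V_t\le\O(\eta^2 L H_i^2)\sum_{s\in\text{round }i}e_s+\O(\eta^2\bar\sigma^2 H_i^3)$, hence
\[
 \sum_{t=0}^{T-1}V_t\le\O(\eta^2 L)\sum_{i=1}^R H_i^2\!\!\sum_{s\in\text{round }i}\!\!e_s+\O(\eta^2\bar\sigma^2)\sum_{i=1}^R H_i^3 .
\]

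\textbf{Step 3 (combine) and the main obstacle.} Telescoping the Step~1 recursion over $t=0,\dots,T-1$ and using $r_T\ge0$ gives $\eta\sum_t e_t\le r_0+c_1\,\eta L\sum_t V_t+c_2\,T\eta^2\bar\sigma^2/n$; substituting the Step~2 bound, the mixed term is $\O(\eta^3 L^2)\sum_i H_i^2\sum_{s\in\text{round }i}e_s$, and since $\eta^3 L^2 H_i^2=\eta\,(\eta L H_i)^2\le\eta/49$ it is at most a fixed fraction of $\eta\sum_t e_t$ and can be moved to the left, while the leftover is $\O(\eta^3 L\bar\sigma^2)\sum_i H_i^3$. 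Dividing by $\eta T$ and substituting $\eta=c\sqrt{n/T}$, so that $\frac{1}{\eta T}=\frac{1}{c\sqrt{nT}}$, $\frac{\eta}{n}=\frac{c}{\sqrt{nT}}$, and $\frac{\eta^2}{T}=\frac{c^2 n}{T^2}$, produces~\eqref{eq:convexrate} once the constants are tracked carefully. I expect the main difficulty to be exactly this simultaneous absorption: both the second-moment term of the $r_t$-recursion and the $V_t$-recursion itself feed the optimality gaps $e_s$ back into themselves, and one must verify that every resulting feedback coefficient stays strictly below $1$ --- which is precisely what $H_i\le\frac{1}{7L\eta}$ secures and which would fail without it. A secondary technical point is carrying out the variance transfer --- replacing stochastic-gradient second moments evaluated at the current heterogeneous iterates by the single quantity $\bar\sigma^2$, which is defined only at $\bx^*$ --- while retaining the $1/n$ factor on the noise, which uses both convexity and $L$-smoothness of the $F_i(\cdot,\xi_i)$ and independence of the samples across agents.
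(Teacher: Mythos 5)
Your proposal follows essentially the same route as the paper: a one-step descent inequality for $r_t$ carrying $-\eta e_t$ plus an $\O(\eta L V_t)+\O(\eta^2\bar\sigma^2/n)$ perturbation, a consensus-error bound feeding $e_s$ and $\bar\sigma^2 H_i^3$ back in, absorption of every feedback coefficient using $H_i\le\frac{1}{7L\eta}$, and then telescoping and substituting $\eta=c\sqrt{n/T}$. The only cosmetic differences are that you re-derive the variance-transfer estimate that the paper imports from Koloskova et al., and you resolve the $V$-feedback inside a round by a max-and-rearrange argument instead of the paper's recursive unrolling with $(1+\frac{1}{3H})^{H}\le 2$; both are valid and yield the same bound.
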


	An immediate corollary of Theorem \ref{theo:convex} is a sufficient condition on the sequence $\{H_i\}_{i=1}^R$ that leads to linear speedup in the convergence of Algorithm \ref{alg1}. 
	
	\begin{corollary}
		\label{corr:convex}
		Assume that $T\ge N^3$. In order to achieve \emph{Linear Speedup},  $\frac{1}{T}\sum_{t=0}^{T-1}e_t=\mathcal{O}(\frac{1}{\sqrt{nT}})$, it is enough to select the local steps sequence $\{H_i\}_{i=1}^R$ such that
		\begin{align*}
			(1)H_i\le \frac{1}{7L\eta}=\frac{\sqrt{T}}{7Lc\sqrt{n}},\forall i\quad,
			(2)\sum_{i = 1}^R H_i = T\quad,
			(3)\sum_{i = 1}^R H_i^3=\mathcal{O}(\frac{T^{\frac{3}{2}}}{n^{\frac{3}{2}}}).
		\end{align*}
	\end{corollary}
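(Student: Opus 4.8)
The plan is simply to substitute conditions~(1)--(3) into the bound~\eqref{eq:convexrate} of Theorem~\ref{theo:convex} and check that each of its two terms is $\O(1/\sqrt{nT})$. First I would observe that condition~(1), $H_i\le \frac{1}{7L\eta}=\frac{\sqrt T}{7Lc\sqrt n}$ for all $i$, is exactly the hypothesis under which Theorem~\ref{theo:convex} is stated, so with the stepsize $\eta_t=c\sqrt{n/T}$ the estimate~\eqref{eq:convexrate} is in force and it remains only to control its right-hand side. Treating $r_0$, $\bar{\sigma}^2$, $L$ and the fixed constant $c>0$ as $\O(1)$ quantities, the first term satisfies $\frac{2r_0+6c^2\bar{\sigma}^2}{c\sqrt{nT}}=\O\!\big(\frac{1}{\sqrt{nT}}\big)$ immediately. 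For the second term I would plug in condition~(3):
\[
\frac{24L\bar{\sigma}^2c^2n}{T^2}\sum_{i=1}^R H_i^3=\O\!\Big(\frac{n}{T^2}\cdot\frac{T^{3/2}}{n^{3/2}}\Big)=\O\!\Big(\frac{1}{\sqrt{nT}}\Big).
\]
Adding the two contributions yields $\frac1T\sum_{t=0}^{T-1}e_t=\O(1/\sqrt{nT})$, which is the claimed linear speedup. Condition~(2), $\sum_{i=1}^R H_i=T$, enters only as the definitional requirement that the communication intervals partition the horizon of $T$ iterations, and plays no further role in the estimate.

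The remaining point is to explain the hypothesis $T\ge n^3$. It is not used in the derivation above; rather, it guarantees that conditions~(1)--(3) can be satisfied simultaneously by a genuine integer-valued sequence $\{H_i\}$. Indeed, with the fixed choice $H_i\equiv H$ one has $R=T/H$, so condition~(3) becomes $TH^2=\O(T^{3/2}/n^{3/2})$, i.e.\ $H=\O(T^{1/4}/n^{3/4})$; such an $H$ can be taken to be a positive integer as soon as $T^{1/4}$ exceeds a constant multiple of $n^{3/4}$, that is $T=\Omega(n^3)$, which is the stated assumption. One also checks this choice is compatible with~(1), since $T^{1/4}/n^{3/4}\le \sqrt T/\sqrt n$ whenever $T\ge n$, which holds a fortiori.

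There is no genuine obstacle in this argument: the corollary is a direct substitution into Theorem~\ref{theo:convex}. The only steps requiring a moment's care are tracking the powers of $n$ and $T$ in the second term and recognizing that $T\ge n^3$ should be read as a feasibility condition for the triple~(1)--(3) rather than as something needed to obtain the rate itself.
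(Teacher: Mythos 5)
Your proof is correct and takes essentially the same route the paper intends: the corollary is stated as an immediate consequence of Theorem~\ref{theo:convex}, obtained exactly by substituting conditions (1)--(3) into the bound~\eqref{eq:convexrate} and checking that $\frac{n}{T^2}\cdot\frac{T^{3/2}}{n^{3/2}}=\frac{1}{\sqrt{nT}}$. Your reading of $T\ge n^3$ as a feasibility condition (so that an integer sequence, e.g.\ constant $H_i$ with $H=\mathcal{O}(T^{1/4}/n^{3/4})\ge 1$, can satisfy all three conditions) is also the natural one and consistent with the paper's discussion of $R=\mathcal{O}((nT)^{3/4})$.
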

	
	\begin{remark}
		A closer look at the bound~\eqref{eq:convexrate} reveals that in order to minimize the error bound of Local SGD, the sequence $\{H_i\}_{i=1}^R$ should minimize $\sum_{i = 1}^R H_i^3$ subject to $\sum_{i = 1}^R H_i = T$. This leads to the communication strategy of a fixed number of local steps, i.e., $H_i = \frac{T}{R}$. Therefore, for convex local functions, the fixed number of local steps seems to be the best communication strategy for Local SGD. Moreover, from Corollary \ref{corr:convex}, we immediately get that in order to achieve \emph{linear speedup}, the number of communication rounds should be $R=\mathcal{O}((nT)^{3/4})$, which matches the best-known results in this setting ~\cite{khaled2020tighter}.
	\end{remark}
	
	\subsection{Convergence Result for Non-Convex Functions}\label{sec:nonconvex}
	
	In this section, we focus on the class of nonconvex local functions. However, we need to impose the additional $(G,B)$-BGD assumption to analyze the convergence rate versus communication complexity trade-off. To state our main result, let us define 
	\begin{align}\nonumber
		h_t = \|\nabla f(\bbx ^{(t)})\|^2,
	\end{align}
	which is the gradient norm of the average iterates in the Local SGD. Then, we have the following theorem.
	
	\begin{theorem}\label{theo:nonconvex}
		Let Assumption \ref{a:nonconvex} hold, fix a stepsize $\eta_t =c\sqrt{\frac{n}{T}}$, $\forall c>0$, and set a sequence of communication intervals that satisfy $H_i\le \frac{1}{7LB\eta}=\frac{\sqrt{T}}{7LBc\sqrt{n}},\forall i$. Then, the sequence generated by Algorithm~\ref{alg1} has the following property:
		\begin{align}
			\label{eq:nonconvexrate}
			\frac{1}{T}\!\sum_{t=0}^{T-1}h_t\!\le\! \frac{8e_0+4c^2\sigma^2}{c\sqrt{nT}}\!+\!\frac{48L^2(\sigma^2\!+\!G^2)c^2n}{T^2}\sum_{i = 1}^RH_i^3.
		\end{align}
	\end{theorem}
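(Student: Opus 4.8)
The plan is to follow the standard descent-lemma route for nonconvex Local SGD, tracking the average iterate $\bbx^{(t)}$ and bounding the extra error coming from client drift (the consensus error $V_t$) through the local-step counts $H_i$. First I would establish the one-step inequality: using $L$-smoothness of $f$ applied to $\bbx^{(t+1)}=\bbx^{(t)}-\eta_t\bbg^{(t)}$, and taking expectations conditioned on the past, I would obtain something of the form $e_{t+1}\le e_t-\frac{\eta_t}{2}h_t-\frac{\eta_t}{2}\|\nabla f(\bbx^{(t)})\|^2+\frac{\eta_t L^2}{2}V_t+\frac{L\eta_t^2}{2n}\sigma^2$, where the $V_t$ term arises from bounding $\|\frac1n\sum_i\nabla f_i(\bx_i^{(t)})-\nabla f(\bbx^{(t)})\|^2$ by $\frac{L^2}{n}\sum_i\|\bx_i^{(t)}-\bbx^{(t)}\|^2=L^2V_t$ via $L$-smoothness of each $f_i$, and the last term handles the variance of the averaged stochastic gradient across the $n$ independent samples (this is exactly where the $1/n$ that yields linear speedup enters). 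With the constant stepsize $\eta_t\equiv\eta=c\sqrt{n/T}$, summing over $t=0,\dots,T-1$ and telescoping gives $\frac1T\sum_t h_t\le \frac{2e_0}{\eta T}+\frac{L\eta\sigma^2}{n}+\frac{L^2}{T}\sum_t V_t$, so the crux is a good bound on $\frac1T\sum_{t=0}^{T-1}V_t$.

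The second main step is the consensus-error bound. Within the $i$-th communication interval, all agents start from the common point $\bbx^{(\tau_{i-1})}$, so for $t$ with $k(t)=i$ one has $\bx_j^{(t)}-\bbx^{(t)}=-\eta\sum_{s=\tau_{i-1}}^{t-1}(\bg_j^s-\bbg^{(s)})$. I would expand $\|\bx_j^{(t)}-\bbx^{(t)}\|^2$, separate the stochastic-noise part from the gradient-dissimilarity part, and use the $(G,B)$-BGD assumption to control $\frac1n\sum_j\|\nabla f_j(\bx_j^{(s)})\|^2\le$ roughly $G^2+B^2h_s+\text{(drift terms involving }V_s)$. Because there are at most $H_i$ terms in the inner sum, Cauchy–Schwarz or Jensen produces a factor $H_i$, and summing $\|\bx_j^{(t)}-\bbx^{(t)}\|^2$ over the $H_i$ time steps in the interval yields a factor $H_i^2$, giving a per-interval bound of order $\eta^2 H_i^2\cdot(\text{noise}+G^2+B^2\sum_{s\in I_i}h_s+L^2\sum_{s\in I_i}V_s)$. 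Multiplying by the interval length $H_i$ when we sum $V_t$ over the whole interval gives the $\sum_i H_i^3$ dependence in the final bound; the condition $H_i\le \frac{1}{7LB\eta}$ is precisely what is needed so that the $L^2B^2\eta^2 H_i^2 V_s$ feedback term can be absorbed (it makes the coefficient $\le 1/49<1$, leaving room to rearrange), and it also lets the $B^2h_s$ term be absorbed into the $-\frac{\eta}{2}h_t$ slack carried over from the descent inequality.

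The final step is bookkeeping: substitute the consensus bound $L^2\sum_t V_t\le \text{const}\cdot L^2\eta^2(\sigma^2+G^2)n\cdot\frac1n\sum_i H_i^3$ (the $n$ appearing because we divide by $n$ when averaging over agents but the noise variance per coordinate-block is $\sigma^2$, and $\eta^2=c^2n/T$) into the telescoped descent inequality, plug in $\eta=c\sqrt{n/T}$ so that $\frac{2e_0}{\eta T}=\frac{2e_0}{c\sqrt{nT}}$ and $\frac{L\eta\sigma^2}{n}=\frac{Lc\sigma^2}{\sqrt{nT}}$, and collect constants to match the stated $\frac{8e_0+4c^2\sigma^2}{c\sqrt{nT}}+\frac{48L^2(\sigma^2+G^2)c^2n}{T^2}\sum_i H_i^3$. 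I expect the main obstacle to be the consensus-error recursion: the term $L^2V_s$ appears on both sides (client drift feeds back into itself), so one must carefully set up the summation over the interval, apply the step-size restriction $H_i\le\frac{1}{7LB\eta}$ to get a strict contraction, and unroll cleanly without losing the $1/n$ in the noise term or picking up extra $H_i$ factors — keeping the constants sharp enough to land at $48$ rather than something larger is the delicate part. Everything else (smoothness expansions, Jensen/Cauchy–Schwarz, telescoping) is routine.
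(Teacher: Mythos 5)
Your proposal follows essentially the same route as the paper: a smoothness-based descent inequality $e_{t+1}\le e_t-\Theta(\eta_t)h_t+\frac{L\sigma^2}{2n}\eta_t^2+L^2\eta_tV_t$, a consensus-error bound within each interval that produces the $\sum_i H_i^3$ term via the $(G,B)$-BGD assumption, and use of the condition $H_i\le\frac{1}{7LB\eta}$ to absorb both the $V$-feedback in the drift recursion and the $B^2h_j$ term into the negative $h_t$ slack before telescoping. The only cosmetic difference is that the paper drops the $\bbg^{(s)}$ centering term outright (via $\sum_i\|\bx_i-\bbx\|^2\le\sum_i\|\bx_i\|^2$) and bounds $\E\|\nabla F_i\|^2$ directly, rather than separating noise from dissimilarity as you suggest, but this does not change the argument.
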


	As a corollary of Theorem \ref{theo:convex}, we obtain the following set of sufficient conditions on the sequence $\{H_i\}_{i=1}^R$ that leads to linear speedup in the convergence of Algorithm \ref{alg1}. 
	
	\begin{corollary}
		\label{corr:nonconvex}
		Assume that $T\ge N^3$. In order to achieve \emph{Linear Speedup, } $\frac{1}{T}\sum_{t=0}^{T-1}h_t=\mathcal{O}(\frac{1}{\sqrt{nT}})$, it is enough to select the local steps sequence $\{H_i\}_{i=1}^R$ such that
		\begin{align*}
			(1)H_i\le \frac{1}{7LB\eta}=\frac{\sqrt{T}}{7LBc\sqrt{n}},\forall i\quad,
			(2)\sum_{i = 1}^R H_i = T\quad,
			(3)\sum_{i = 1}^R H_i^3=\mathcal{O}(\frac{T^{\frac{3}{2}}}{n^{\frac{3}{2}}}).
		\end{align*}
	\end{corollary}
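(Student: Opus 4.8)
The plan is to derive Corollary~\ref{corr:nonconvex} directly from the bound~\eqref{eq:nonconvexrate} of Theorem~\ref{theo:nonconvex}, treating $c, L, B, G, \sigma$ and $e_0$ as absolute constants and tracking only the dependence on $n$ and $T$. First I would observe that condition~(1), $H_i\le \frac{1}{7LB\eta}=\frac{\sqrt T}{7LBc\sqrt n}$ for all $i$, is precisely the hypothesis required by Theorem~\ref{theo:nonconvex}, while condition~(2), $\sum_{i=1}^R H_i = T$, is simply the statement that the $R$ communication intervals partition the $T$ iterations of Algorithm~\ref{alg1}. Hence the estimate
\[
\frac{1}{T}\sum_{t=0}^{T-1} h_t \le \frac{8e_0 + 4c^2\sigma^2}{c\sqrt{nT}} + \frac{48L^2(\sigma^2 + G^2)c^2 n}{T^2}\sum_{i=1}^R H_i^3
\]
is available, and it only remains to show each of the two terms on the right-hand side is $\mathcal{O}(1/\sqrt{nT})$.

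The first term is immediately $\mathcal{O}(1/\sqrt{nT})$ since its numerator $8e_0 + 4c^2\sigma^2$ is a constant independent of $n$ and $T$. For the second term I would substitute condition~(3), $\sum_{i=1}^R H_i^3 = \mathcal{O}(T^{3/2}/n^{3/2})$, which gives
\[
\frac{48L^2(\sigma^2+G^2)c^2 n}{T^2}\cdot \mathcal{O}\!\left(\frac{T^{3/2}}{n^{3/2}}\right) = \mathcal{O}\!\left(\frac{n \cdot T^{3/2}}{T^2 \cdot n^{3/2}}\right) = \mathcal{O}\!\left(\frac{1}{\sqrt{nT}}\right).
\]
Adding the two $\mathcal{O}(1/\sqrt{nT})$ contributions yields $\frac{1}{T}\sum_{t=0}^{T-1} h_t = \mathcal{O}(1/\sqrt{nT})$, the claimed linear speedup. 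This is exactly the same bookkeeping used to pass from Theorem~\ref{theo:convex} to Corollary~\ref{corr:convex}, with $e_0$ and $\sigma^2$ playing the roles previously played by $r_0$ and $\bar\sigma^2$.

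Finally, I would explain why the hypothesis $T \ge N^3$ is included: it ensures that conditions~(1)--(3) are jointly satisfiable by a legitimate sequence, for instance the fixed choice $H_i = T/R$. Indeed, condition~(1) forces $R \gtrsim \sqrt{nT}$, whereas condition~(3) with fixed $H_i$ forces $R^2 \gtrsim (nT)^{3/2}$, i.e.\ $R \gtrsim (nT)^{3/4}$; since necessarily $R \le T$ (each $H_i \ge 1$), both requirements can coexist only when $(nT)^{3/4} \lesssim T$, i.e.\ $n^3 \lesssim T$, which is exactly $T \ge N^3$. I do not expect any genuine obstacle here — the argument is a substitution plus an exponent count — the only care needed is to keep the constant-versus-asymptotic bookkeeping consistent and to confirm via the fixed-$H_i$ construction that the corollary is non-vacuous under $T \ge N^3$.
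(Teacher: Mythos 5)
Your derivation is correct and is exactly the argument the paper intends: the corollary follows from Theorem~\ref{theo:nonconvex} by substituting condition (3) into the second term of \eqref{eq:nonconvexrate} and observing that the first term is already $\mathcal{O}(1/\sqrt{nT})$, the same bookkeeping as for Corollary~\ref{corr:convex}. Your added observation that $T\ge N^3$ is what makes conditions (1)--(3) jointly satisfiable (e.g.\ via $H_i = T/R$ with $R = \Theta((nT)^{3/4}) \le T$) is a correct reading of an assumption the paper states without comment.
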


	\begin{remark}
		\label{rem:3}
		By taking a closer look at the bound~\eqref{eq:nonconvexrate}, it is easy to see that in order to minimize the error bound of Local SGD, the local steps sequence $\{H_i\}_{i=1}^R$ should minimize $\sum_{i = 1}^R H_i^3$ subject to $\sum_{i = 1}^R H_i = T$. This leads to the communication strategy of a fixed number of local steps, i.e., $H_i = \frac{T}{R}$. Therefore, for nonconvex local functions, we conclude that a fixed number of local steps is the best communication strategy for Local SGD. Moreover, from Corollary \ref{corr:nonconvex}, we immediately get that in order to achieve linear speedup, the number of communication rounds should be $R=\mathcal{O}((nT)^{3/4})$, which matches the best-known results in this setting~\cite{karimireddy2020scaffold}.
	\end{remark}
	
	\section{Numerical Results}\label{sec:numerical}
	
	This section shows the results for two sets of experiments on the MNIST dataset~\cite{lecun1998gradient} to validate our theoretical findings. We focus on strongly-convex loss functions for the first set of experiments, where we train a logistic regression model with $l_2$ regularization. We focus on nonconvex loss functions for the second set of experiments, where we train a small, fully connected neural network.
	
	\subsection{Logistic Regression Model for MNIST}
	In this set of experiments, we distribute the MNIST dataset to $n=20$ agents and apply Local SGD to train a multinomial logistic regression model with $l_2$ regularization. We first sort the data by digit label, then divide the dataset into $100$ shards and assign each of $20$ agents $5$ shards. Each agent will have examples of approximately five digits, reflecting moderately heterogeneous data sets. 
	
	We evaluate different communication strategies (i.e., various numbers of local steps when following communication strategy with a fixed number of local steps as in Example \ref{exp:1}, and $a=10,s=0.2$ when following communication strategy with an increasing number of local steps as in Example \ref{exp:2}) the corresponding communication rounds and iterations needed for the model to reach a 91.5\% accuracy on the MNIST test dataset. The simulation results are averaged over $5$ independent runs of the experiments and are shown in Figure~\ref{fig:1}, and Figure~\ref{fig:2}.
	
	For the set of hyperparameters, we use a training batch size of $8$, $l_2$ regularization parameter $\mu = 0.001$, $\beta=1000$ and set stepsize at iteration $t$ to be $\eta_t  = \frac{\beta}{t+\beta}\eta_0$, where the initial stepsize $\eta_0$ is chosen based on a grid search of resolution $10^{-3}$. 
	
	Figure~\ref{fig:1} shows the details of the runs of the experiment. Figure~\ref{fig:2} shows the summary of the runs. For example, the upper left yellow dot in Figure~\ref{fig:2} corresponds to the average of $5$ runs of Local SGD with constant $H_i = 1$, showing that with constant $H_i = 1$ it took the algorithm an average of $\sim 305$ communication rounds as well as total iterations to reach 91.5\% accuracy.
	\begin{figure}[]
		\centering
		\begin{subfigure}{0.48\linewidth}
			\includegraphics[width=\linewidth]{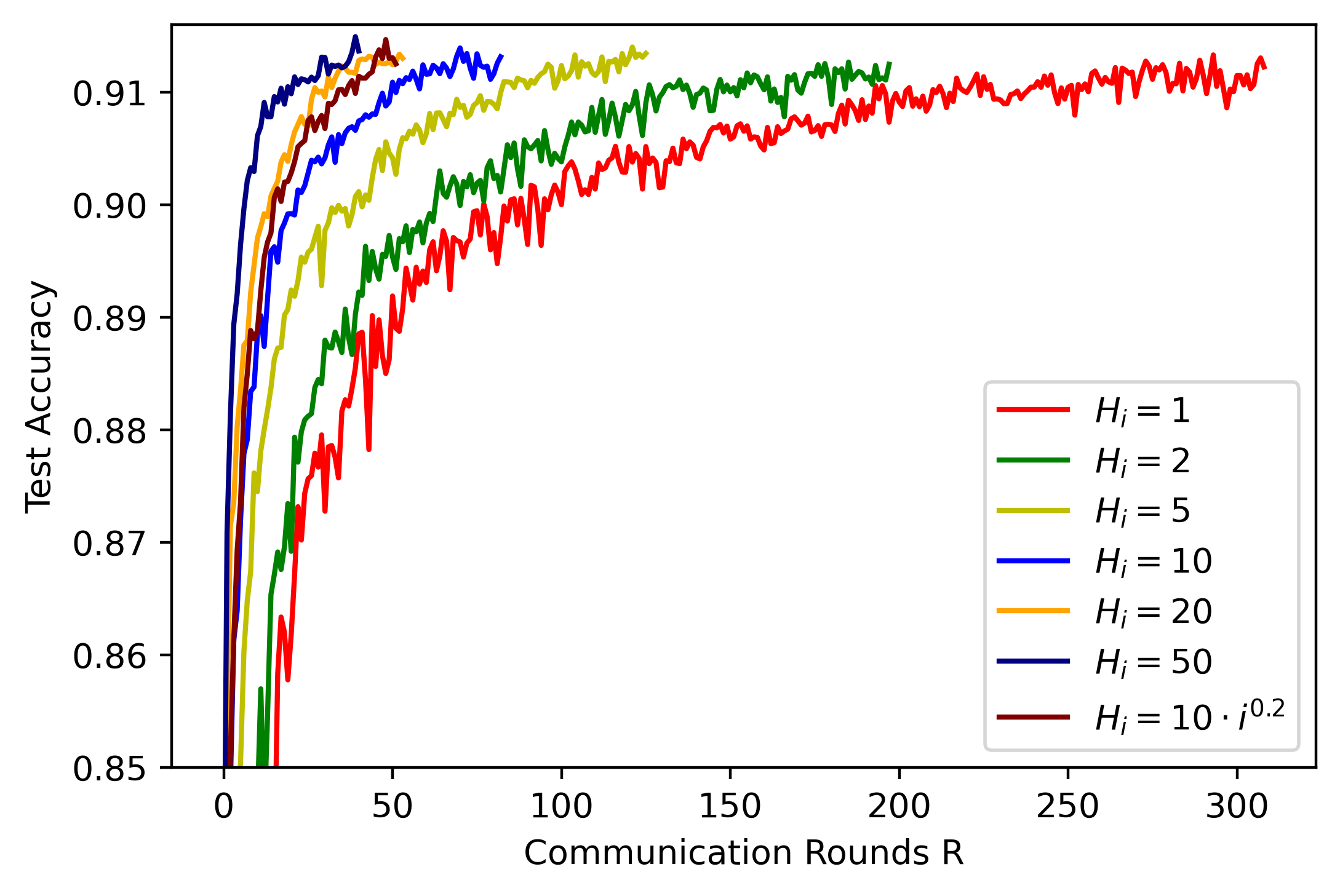}
			\caption{}
			\label{fig:1a}
		\end{subfigure}
		\begin{subfigure}{0.48\linewidth}
			\includegraphics[width=\linewidth]{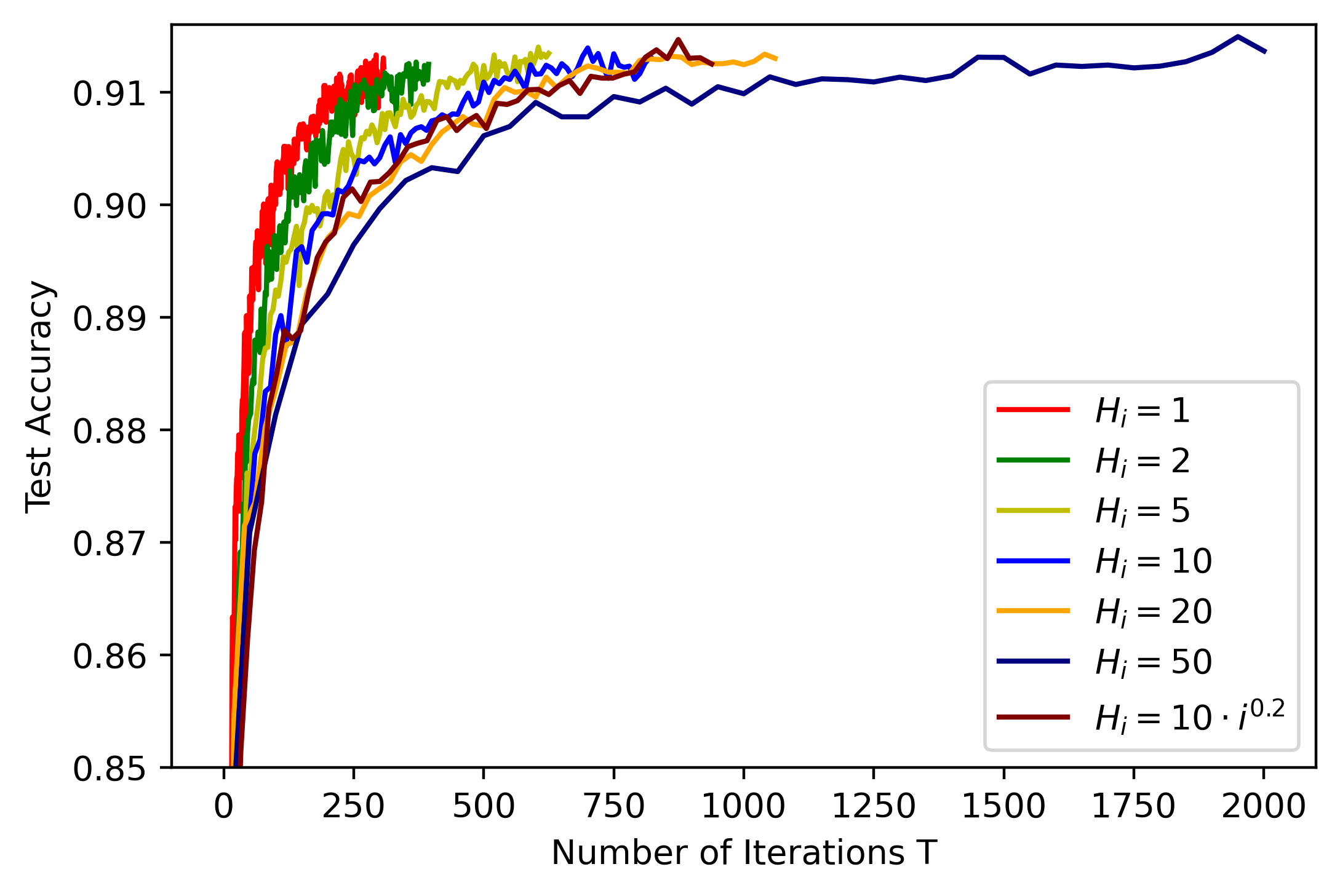}
			\caption{}
			\label{fig:1b}
		\end{subfigure}
		\caption{ Logistic regression for MNIST. \ref{fig:1a}: Test accuracy vs. communication rounds for different communication strategies. \ref{fig:1b}: test accuracy vs. the number of iterations for different communication strategies. Simulation results averaged over $5$ runs of the experiment.}
		\label{fig:1}
	\end{figure}
	\begin{figure}
		\centering
		\includegraphics[width=0.48\linewidth]{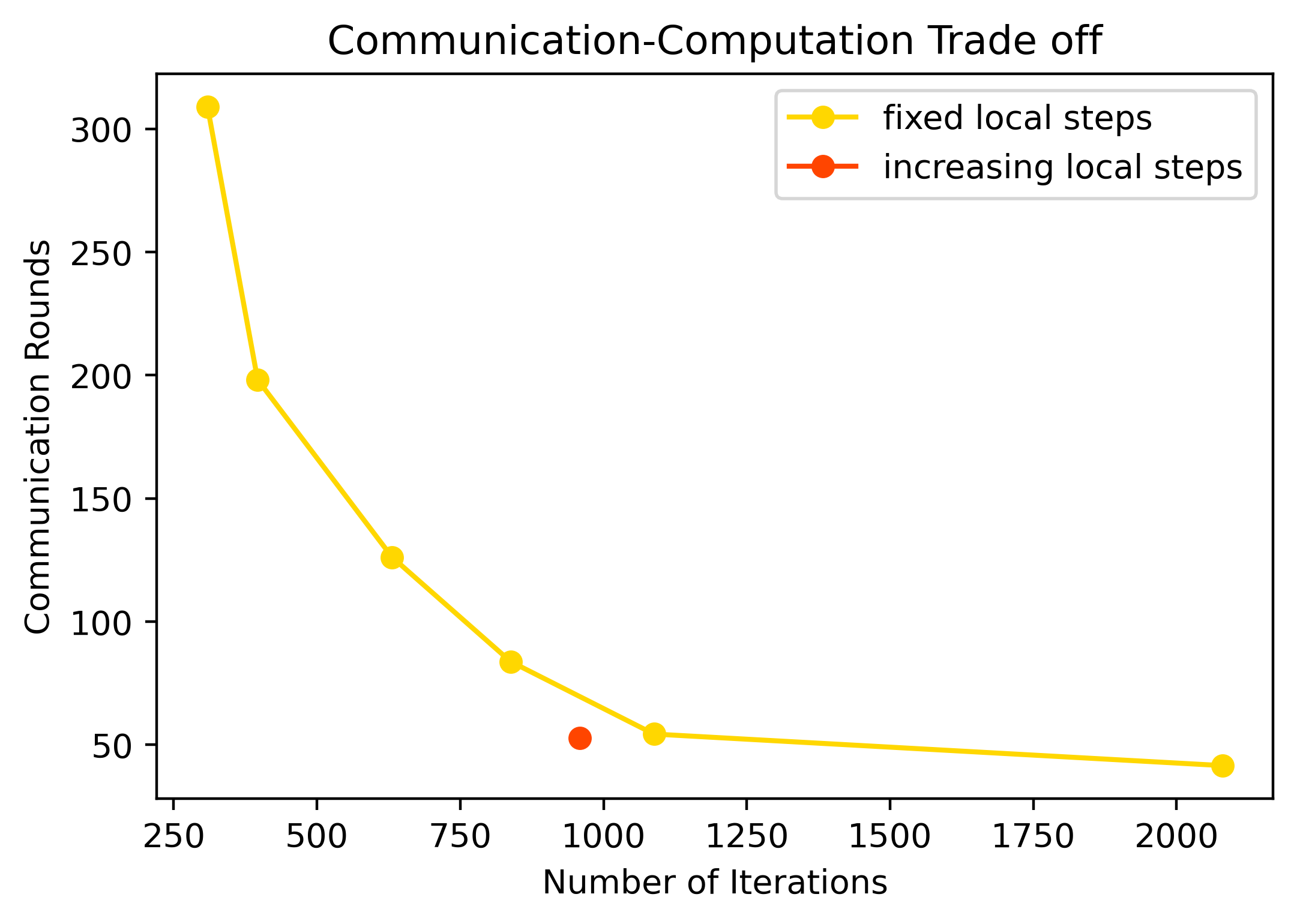}
		\caption{ Logistic regression for MNIST. Summary of the number of communication rounds and iterations needed for the model to reach a 91.5\% accuracy on the MNIST test dataset for different communication strategies. Yellow dots: fixed number of local steps as in Example \ref{exp:1}. Red dot: increasing number of local steps as in Example \ref{exp:2} with $a=10,s=0.2$. Simulation results averaged over $5$ runs of the experiment.}
		\label{fig:2}
	\end{figure}
	
	\smallskip
	\noindent
	{\bf Communication-Computation Trade-Off:} In general, we can observe a communication-computation trade-off such that with more local computation (corresponding to a larger number of iterations $T$), less communication is needed (corresponding to a smaller number of communication rounds $R$) for the model to reach a certain accuracy.
	
	\smallskip
	\noindent
	{\bf Better Communication Efficiency with Increasing Number of Local Steps:} As we can see from Figure \ref{fig:2}, the red dot lies to the bottom left of the yellow line, which shows that the communication strategy of an increasing number of local steps is indeed more communication efficient than a fixed number of local steps, thus validating our claim in Remark~\ref{rem:1}.

	\subsection{Neural Network for MNIST}
	In this set of experiments, we distribute the MNIST dataset to $n$ agents and apply Local SGD to train a fully-connected neural network (2NN) with 2-hidden layers with 50 units each using ReLu activations (42310 total parameters)\footnote{We have deliberately chosen to train a small neural network to avoid getting an overparameterized model, in which case the convergence rate of Local SGD would be different~\cite{qin2022faster}.}. We first sort the data by digit label, then divide the dataset into $n$ shards and assign each of $n$ agents $1$ shards. Each agent will have examples of approximately one digit, reflecting the most heterogeneous data sets. 
	
	We evaluate the speedup effect of the number of agents $n$ for different communication strategies. In particular, we set a fixed number of $T=20000$ iterations and run Local SGD for $T$ iterations with different communication strategies, a different number of agents $n$, and a different number of communication rounds $R$. After that, a speedup factor is derived by dividing the expected error of a single worker SGD at the final iterate $T$ by the expected error of Local SGD with different communication strategies and a different number of agents $n$ at the final iterate $T$. We plot the speedup curve in Figure~\ref{fig:3}. In the case of linear speedup, we should expect the dashed black line on the graph, corresponding to speedup $=\sqrt{n}$.
	
	We use a training batch size of $1$ and choose stepsize $\eta$ based on a grid search of resolution $10^{-3}$. The simulation results are averaged over $5$ independent runs of the experiments.

	\begin{figure}
		\centering
		\includegraphics[width=0.48\linewidth]{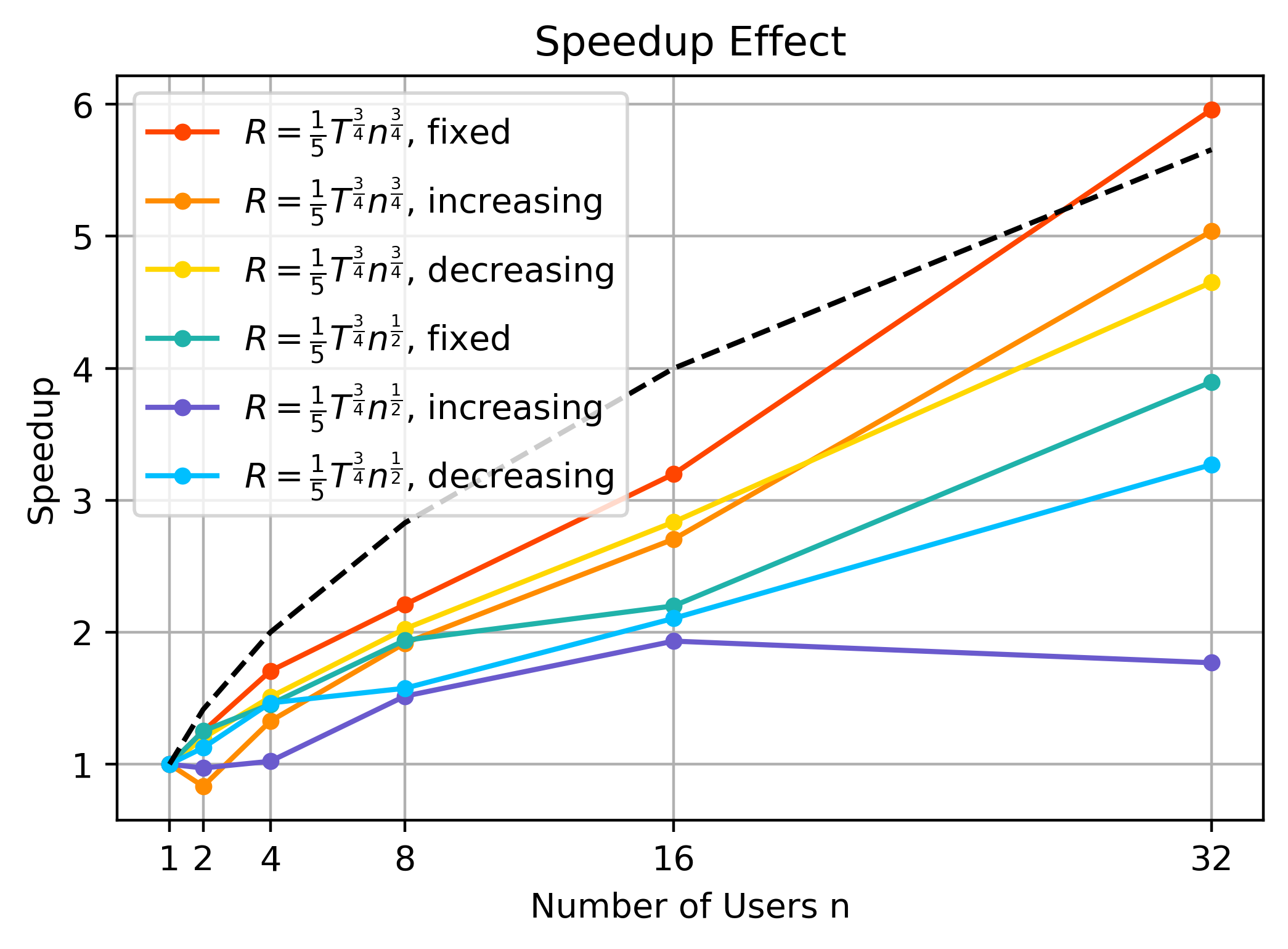}
		\caption{Speedup curve for Local SGD with different communication strategies. Fixed: fixed number of local steps subject to a total of $R=\frac{1}{5}T^{\frac{3}{4}}n^{\frac{3}{4}}$ or $R=\frac{1}{5}T^{\frac{3}{4}}n^{\frac{1}{2}}$ communication rounds. Increasing: $H_i\varpropto i^2$ subject to a total of $R=\frac{1}{5}T^{\frac{3}{4}}n^{\frac{3}{4}}$ or $R=\frac{1}{5}T^{\frac{3}{4}}n^{\frac{1}{2}}$ communication rounds. Decreasing: $H_i\varpropto (R-i)^2$ subject to a total of $R=\frac{1}{5}T^{\frac{3}{4}}n^{\frac{3}{4}}$ or $R=\frac{1}{5}T^{\frac{3}{4}}n^{\frac{1}{2}}$ communication rounds. The dashed black line corresponds to speedup $=\sqrt{n}$.}
		\label{fig:3}
	\end{figure}
	
	\smallskip
	\noindent
	{\bf Better Performance with Fixed Number of Local Steps:} We can observe from Figure~\ref{fig:3} that Local SGD with fixed number of local steps significantly outperforms its increasing or decreasing number of local steps counterparts in both settings of \mbox{$R=\frac{1}{5}T^{\frac{3}{4}}n^{\frac{3}{4}}$} (corresponding to sufficient communication) and \mbox{$R=\frac{1}{5}T^{\frac{3}{4}}n^{\frac{1}{2}}$} (corresponding to insufficient communication). This validates our claim in Remark~\ref{rem:3} that a fixed number of local steps is the best communication strategy for Local SGD for nonconvex local functions.
	
	\smallskip
	\noindent
	{\bf Almost Tight Bound for $R=\mathcal{O}((nT)^{3/4})$ to Achieve Linear Speedup:} Another observation from Figure~\ref{fig:3} is that while setting $R=\mathcal{O}((nT)^{3/4})$ and following a communication strategy of a fixed number of local steps, Local SGD successfully achieved linear speedup, as expected, decreasing $R$ by a factor of $n^{\frac{1}{4}}$ fails for Local SGD to achieve linear speedup, even with the best communication strategy of a fixed number of local steps. This suggests that the bound of $R=\mathcal{O}((nT)^{3/4})$ to achieve linear speedup is close to tight.
	
	\section{Conclusions}\label{sec:conclusion}
	
	In this paper, we analyzed the role of local steps in Local SGD in the heterogeneous data setting. We characterized the convergence rate of Local SGD as a function of the sequence of the local steps $\{H_i\}_{i=1}^R$ under various settings of strongly convex, convex, and nonconvex local functions. Based on this characterization, we gave sufficient conditions on the sequence $\{H_i\}_{i=1}^R$ that covers broad classes of communication strategies such that Local SGD can achieve linear speedup. Furthermore, for strongly convex local functions, we proposed a new communication strategy with increasing local steps that enjoy better performance than the vanilla fixed local steps communication strategy theoretically and in numerical experiments. We argued that fixed local steps are the best communication strategy for Local SGD and recover state-of-the-art convergence rate results for convex and nonconvex local functions. Such an argument is validated by numerical experiments, which showed that the results are almost tight. 
	
	As a future research direction, one can consider analyzing the role of local steps in other federated optimization methods, e.g., SCAFFOLD~\cite{karimireddy2020scaffold}, FedAC~\cite{yuan2020federated}. Moreover, generalizing our work to directed networks in which agents communicate with their neighbors rather than a central node is another interesting research problem, $e.g.$ for Stochastic Gradient Push algorithm~\cite{assran2019stochastic}. Also, we only considered the role of local steps in Local SGD with full agent participation; generalizing it to the partial participation setting is yet another interesting problem.

	\bibliographystyle{tfs}
	\bibliography{references}

\begin{thebibliography}{10}
\providecommand{\MR}{\relax\unskip\space MR }
\providecommand{\url}[1]{\normalfont{#1}}
\providecommand{\urlprefix}{Available at }

\bibitem{assran2019stochastic}
M. Assran, N. Loizou, N. Ballas, and M. Rabbat, \emph{Stochastic gradient push
  for distributed deep learning}, in \emph{International Conference on Machine
  Learning}. PMLR, 2019, pp. 344--353.

\bibitem{assran2018asynchronous}
M. Assran and M. Rabbat, \emph{Asynchronous subgradient-push}, arXiv preprint
  arXiv:1803.08950  (2018).

\bibitem{bertsekas2015parallel}
D. Bertsekas and J. Tsitsiklis, \emph{Parallel and distributed computation:
  numerical methods}, Athena Scientific, 2015.

\bibitem{dean2012large}
J. Dean, G. Corrado, R. Monga, K. Chen, M. Devin, M. Mao, M. Ranzato, A.
  Senior, P. Tucker, K. Yang, \emph{et~al.}, \emph{Large scale distributed deep
  networks}, in \emph{Advances in neural information processing systems}. 2012,
  pp. 1223--1231.

\bibitem{gorbunov2021local}
E. Gorbunov, F. Hanzely, and P. Richt{\'a}rik, \emph{Local sgd: Unified theory
  and new efficient methods}, in \emph{International Conference on Artificial
  Intelligence and Statistics}. PMLR, 2021, pp. 3556--3564.

\bibitem{haddadpour2019local}
F. Haddadpour, M.M. Kamani, M. Mahdavi, and V. Cadambe, \emph{Local sgd with
  periodic averaging: Tighter analysis and adaptive synchronization}, Advances
  in Neural Information Processing Systems 32 (2019).

\bibitem{kairouz2019advances}
P. Kairouz, H.B. McMahan, B. Avent, A. Bellet, M. Bennis, A.N. Bhagoji, K.
  Bonawitz, Z. Charles, G. Cormode, R. Cummings, \emph{et~al.}, \emph{Advances
  and open problems in federated learning}, arXiv preprint arXiv:1912.04977
  (2019).

\bibitem{karimireddy2020scaffold}
S.P. Karimireddy, S. Kale, M. Mohri, S. Reddi, S. Stich, and A.T. Suresh,
  \emph{Scaffold: Stochastic controlled averaging for federated learning}, in
  \emph{International Conference on Machine Learning}. PMLR, 2020, pp.
  5132--5143.

\bibitem{khaled2019tighter}
A. Khaled, K. Mishchenko, and P. Richt{\'a}rik, \emph{Tighter theory for local
  sgd on identical and heterogeneous data}, arXiv  (2019), pp. arXiv--1909.

\bibitem{khaled2020tighter}
A. Khaled, K. Mishchenko, and P. Richt{\'a}rik, \emph{Tighter theory for local
  SGD on identical and heterogeneous data}, in \emph{International Conference
  on Artificial Intelligence and Statistics}. PMLR, 2020, pp. 4519--4529.

\bibitem{koloskova2020unified}
A. Koloskova, N. Loizou, S. Boreiri, M. Jaggi, and S. Stich, \emph{A unified
  theory of decentralized SGD with changing topology and local updates}, in
  \emph{International Conference on Machine Learning}. PMLR, 2020, pp.
  5381--5393.

\bibitem{koloskova2019decentralized}
A. Koloskova, S.U. Stich, and M. Jaggi, \emph{Decentralized stochastic
  optimization and gossip algorithms with compressed communication}, arXiv
  preprint arXiv:1902.00340  (2019).

\bibitem{konevcny2016federated}
J. Kone{\v{c}}n{\`y}, H.B. McMahan, D. Ramage, and P. Richt{\'a}rik,
  \emph{Federated optimization: Distributed machine learning for on-device
  intelligence}, arXiv preprint arXiv:1610.02527  (2016).

\bibitem{lecun1998gradient}
Y. LeCun, L. Bottou, Y. Bengio, and P. Haffner, \emph{Gradient-based learning
  applied to document recognition}, Proceedings of the IEEE 86 (1998), pp.
  2278--2324.

\bibitem{li2019communication}
X. Li, W. Yang, S. Wang, and Z. Zhang, \emph{Communication-efficient local
  decentralized sgd methods}, arXiv preprint arXiv:1910.09126  (2019).

\bibitem{lian2017can}
X. Lian, C. Zhang, H. Zhang, C.J. Hsieh, W. Zhang, and J. Liu, \emph{Can
  decentralized algorithms outperform centralized algorithms? a case study for
  decentralized parallel stochastic gradient descent}, in \emph{Advances in
  Neural Information Processing Systems}. 2017, pp. 5330--5340.

\bibitem{lin2018don}
T. Lin, S.U. Stich, K.K. Patel, and M. Jaggi, \emph{Don't use large
  mini-batches, use local sgd}, arXiv preprint arXiv:1808.07217  (2018).

\bibitem{lu2020moniqua}
Y. Lu and C. De~Sa, \emph{Moniqua: Modulo quantized communication in
  decentralized sgd}, arXiv preprint arXiv:2002.11787  (2020).

\bibitem{lu2020mixml}
Y. Lu, J. Nash, and C. De~Sa, \emph{Mixml: A unified analysis of weakly
  consistent parallel learning}, arXiv preprint arXiv:2005.06706  (2020).

\bibitem{mcmahan2017communication}
B. McMahan, E. Moore, D. Ramage, S. Hampson, and B.A. y~ Arcas,
  \emph{Communication-efficient learning of deep networks from decentralized
  data}, in \emph{Artificial intelligence and statistics}. PMLR, 2017, pp.
  1273--1282.

\bibitem{mcmahan2016federated}
H.B. McMahan, E. Moore, D. Ramage, and B.A. y~ Arcas, \emph{Federated learning
  of deep networks using model averaging. corr abs/1602.05629 (2016)}, arXiv
  preprint arXiv:1602.05629  (2016).

\bibitem{nazari2019dadam}
P. Nazari, D.A. Tarzanagh, and G. Michailidis, \emph{Dadam: A consensus-based
  distributed adaptive gradient method for online optimization}, arXiv preprint
  arXiv:1901.09109  (2019).

\bibitem{qin2021communication}
T. Qin, S.R. Etesami, and C.A. Uribe, \emph{Communication-efficient
  Decentralized Local {SGD} over Undirected Networks}, in \emph{2021 60th IEEE
  Conference on Decision and Control (CDC)}. IEEE, 2021, pp. 3361--3366.

\bibitem{qin2022faster}
T. Qin, S.R. Etesami, and C.A. Uribe, \emph{Faster convergence of local sgd for
  over-parameterized models}, arXiv preprint arXiv:2201.12719  (2022).

\bibitem{qu2020federated}
Z. Qu, K. Lin, J. Kalagnanam, Z. Li, J. Zhou, and Z. Zhou, \emph{Federated
  learning's blessing: Fedavg has linear speedup}, arXiv preprint
  arXiv:2007.05690  (2020).

\bibitem{rieke2020future}
N. Rieke, J. Hancox, W. Li, F. Milletari, H.R. Roth, S. Albarqouni, S. Bakas,
  M.N. Galtier, B.A. Landman, K. Maier-Hein, \emph{et~al.}, \emph{The future of
  digital health with federated learning}, NPJ digital medicine 3 (2020), pp.
  1--7.

\bibitem{spiridonoff2021communication}
A. Spiridonoff, A. Olshevsky, and I. Paschalidis, \emph{Communication-efficient
  sgd: From local sgd to one-shot averaging}, Advances in Neural Information
  Processing Systems 34 (2021).

\bibitem{spiridonoff2020local}
A. Spiridonoff, A. Olshevsky, and I.C. Paschalidis, \emph{Local sgd with a
  communication overhead depending only on the number of workers}, arXiv
  preprint arXiv:2006.02582  (2020).

\bibitem{stich2018local}
S.U. Stich, \emph{Local sgd converges fast and communicates little}, arXiv
  preprint arXiv:1805.09767  (2018).

\bibitem{stich2019error}
S.U. Stich and S.P. Karimireddy, \emph{The error-feedback framework: Better
  rates for sgd with delayed gradients and compressed communication}, arXiv
  preprint arXiv:1909.05350  (2019).

\bibitem{tang2018d}
H. Tang, X. Lian, M. Yan, C. Zhang, and J. Liu, \emph{Decentralized training
  over decentralized data}, arXiv preprint arXiv:1803.07068  (2018).

\bibitem{tang2019doublesqueeze}
H. Tang, C. Yu, X. Lian, T. Zhang, and J. Liu, \emph{Doublesqueeze: Parallel
  stochastic gradient descent with double-pass error-compensated compression},
  in \emph{International Conference on Machine Learning}. PMLR, 2019, pp.
  6155--6165.

\bibitem{wang2018cooperative}
J. Wang and G. Joshi, \emph{Cooperative sgd: A unified framework for the design
  and analysis of communication-efficient sgd algorithms}, arXiv preprint
  arXiv:1808.07576  (2018).

\bibitem{wang2019adaptive}
J. Wang and G. Joshi, \emph{Adaptive communication strategies to achieve the
  best error-runtime trade-off in local-update sgd}, Proceedings of Machine
  Learning and Systems 1 (2019), pp. 212--229.

\bibitem{woodworth2020minibatch}
B. Woodworth, K.K. Patel, and N. Srebro, \emph{Minibatch vs local sgd for
  heterogeneous distributed learning}, arXiv preprint arXiv:2006.04735  (2020).

\bibitem{yu2019linear}
H. Yu, R. Jin, and S. Yang, \emph{On the linear speedup analysis of
  communication efficient momentum sgd for distributed non-convex
  optimization}, arXiv preprint arXiv:1905.03817  (2019).

\bibitem{yuan2020federated}
H. Yuan and T. Ma, \emph{Federated accelerated stochastic gradient descent},
  Advances in Neural Information Processing Systems 33 (2020), pp. 5332--5344.

\bibitem{zinkevich2010parallelized}
M. Zinkevich, M. Weimer, L. Li, and A.J. Smola, \emph{Parallelized stochastic
  gradient descent}, in \emph{Advances in neural information processing
  systems}. 2010, pp. 2595--2603.

\end{thebibliography}



	
	\newpage
	\appendix

	\section{Appendix I: Omitted Proofs}\label{sec:appx}
	
	\subsection{Proof of Theorem \ref{theo:stronglyconvex}}

	In order to prove Theorem \ref{theo:stronglyconvex}, we first establish the following two lemmas. The first lemma allows us to establish a descent property for the distance of iterates from the optimal point, while the second lemma bounds the consensus error among the agents. The proofs of these lemmas are given in Appendix~\ref{appen:lemma}.
	
	\begin{lemma}[Decent Lemma]\label{lem:dec}
		Let Assumption \ref{a:strong} hold. Then,
		\begin{align*}
			r_{t+1}\le (1-\mu \eta_t)r_t-\eta_te_t+\frac{3\bar{\sigma}^2}{n}\eta_t^2+2L\eta_tV_t.
		\end{align*}
	\end{lemma}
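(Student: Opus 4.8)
The plan is to reduce the whole step to one ordinary SGD step on the \emph{mean} iterate. First I would record the structural fact that the averaging operation in Algorithm~\ref{alg1} leaves the mean unchanged: for every $t$, whether or not $t+1\in\I$, one has $\bbx^{(t+1)}=\bbx^{(t)}-\eta_t\bbg^{(t)}$ (if $t+1\in\I$ all agents collapse to this common point; if $t+1\notin\I$ the per-agent update is affine, so averaging commutes with it). Hence, expanding the square,
\begin{align*}
r_{t+1}=\E\big\|(\bbx^{(t)}-\bx^*)-\eta_t\bbg^{(t)}\big\|^2 = r_t - 2\eta_t\,\E\big\langle\bbg^{(t)},\bbx^{(t)}-\bx^*\big\rangle + \eta_t^2\,\E\big\|\bbg^{(t)}\big\|^2,
\end{align*}
and I would bound the last two terms after conditioning on the filtration $\F_t$ generated by $\{\xx_i^{(s)}\}_{s\le t}$, under which $\E[\bbg^{(t)}\mid\F_t]=\tfrac1n\sum_i\nabla f_i(\xx_i^{(t)})$.

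For the inner-product term I would use a perturbed strong-convexity estimate: writing $\bbx^{(t)}-\bx^* = (\xx_i^{(t)}-\bx^*) + (\bbx^{(t)}-\xx_i^{(t)})$, Assumption~\ref{a:strong} gives $\langle\nabla f_i(\xx_i^{(t)}),\xx_i^{(t)}-\bx^*\rangle\ge f_i(\xx_i^{(t)})-f_i(\bx^*)+\tfrac\mu2\|\xx_i^{(t)}-\bx^*\|^2$, while $L$-smoothness gives $\langle\nabla f_i(\xx_i^{(t)}),\bbx^{(t)}-\xx_i^{(t)}\rangle\ge f_i(\bbx^{(t)})-f_i(\xx_i^{(t)})-\tfrac L2\|\bbx^{(t)}-\xx_i^{(t)}\|^2$. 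Averaging over $i$, the $f_i(\xx_i^{(t)})$ terms cancel, $\tfrac1n\sum_i f_i(\bbx^{(t)})=f(\bbx^{(t)})$, and the identity $\tfrac1n\sum_i\|\xx_i^{(t)}-\bx^*\|^2=\|\bbx^{(t)}-\bx^*\|^2+\tfrac1n\sum_i\|\xx_i^{(t)}-\bbx^{(t)}\|^2$ lets me discard a nonnegative term; after taking expectations this shows the $-2\eta_t\E\langle\cdot,\cdot\rangle$ term is at most $-2\eta_t e_t-\mu\eta_t r_t+L\eta_t V_t$.

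For the second-moment term the crucial point is to retain the $\tfrac1n$ on the noise, which I would do via conditional independence of the per-agent samples: $\E[\|\bbg^{(t)}\|^2\mid\F_t]=\|\tfrac1n\sum_i\nabla f_i(\xx_i^{(t)})\|^2+\tfrac1{n^2}\sum_i\E[\|\nabla F_i(\xx_i^{(t)},\xi_i^{(t)})-\nabla f_i(\xx_i^{(t)})\|^2\mid\F_t]$. The deterministic part is bounded by $2L^2\cdot\tfrac1n\sum_i\|\xx_i^{(t)}-\bbx^{(t)}\|^2+2\|\nabla f(\bbx^{(t)})\|^2$ (Jensen plus smoothness) together with $\|\nabla f(\bbx^{(t)})\|^2\le 2L(f(\bbx^{(t)})-f^\star)$, giving at most $2L^2V_t+4Le_t$ in expectation. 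Each noise term I would anchor at $\bx^*$ using $L$-smoothness of $F_i(\cdot,\xi_i)$ and the definition of $\bar\sigma^2$, bounding it by something of order $L^2\|\xx_i^{(t)}-\bx^*\|^2+\E_{\xi_i}\|\nabla F_i(\bx^*,\xi_i)\|^2$, so that after summing and dividing by $n^2$ the contribution is of order $\tfrac{L^2}{n}(r_t+V_t)+\tfrac{\bar\sigma^2}{n}$. Collecting everything yields an inequality of the form $r_{t+1}\le(1-\mu\eta_t)r_t-2\eta_t e_t+L\eta_t V_t+\eta_t^2(c_1Le_t+c_2L^2V_t+\tfrac{c_3L^2}{n}r_t+\tfrac{c_4}{n}\bar\sigma^2)$, and the final step is to absorb each $\eta_t^2$ remainder into a leading term — $\eta_t^2e_t$ into the spare half of $-2\eta_te_t$, $\eta_t^2V_t$ into the gap between $L\eta_tV_t$ and $2L\eta_tV_t$, and $\eta_t^2r_t$ into the factor $1-\mu\eta_t$ (if needed after using $r_t\le\tfrac2\mu e_t$). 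This is exactly where one invokes a stepsize bound of the kind enforced in Theorem~\ref{theo:stronglyconvex} (there $H_i\le\tfrac{\mu(\beta+\sum_{j<i}H_j)}{12L}$ forces $\beta\ge\tfrac{12L}\mu$, hence $\eta_t\le\eta_0=\tfrac2{\mu\beta}\le\tfrac1{6L}$), and it accounts for the apparent slack in the stated constants — the $3$ in $\tfrac{3\bar\sigma^2}{n}\eta_t^2$ and the $2L$ multiplying $V_t$.

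The steps above are routine in shape; the real work, and the part I expect to be delicate, is the last one — the constant-level bookkeeping — and especially checking that the $\tfrac{L^2}{n}\eta_t^2r_t$ term, which appears only because $\bar\sigma^2$ is anchored at $\bx^*$ (no uniform-in-$\xx$ variance being assumed), is genuinely dominated by $(1-\mu\eta_t)r_t$ for the admissible stepsizes. Getting those constants to produce the clean inequality as stated, and choosing the intermediate inequalities accordingly, is the only obstacle.
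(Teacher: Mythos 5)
Your overall architecture is the same as the paper's: one SGD step on the mean iterate, cross term killed by conditioning on the filtration, inner product bounded by perturbed strong convexity plus smoothness, and a bias--variance split of $\E\|\bbg^{(t)}\|^2$. The inner-product estimate and the deterministic part of the second moment go through exactly as you describe and match the paper. The genuine gap is in the variance term, and it sits precisely where you flagged the argument as ``delicate'' --- but it is not a matter of constant bookkeeping; the inequality you chose there is the wrong one. Bounding each per-agent variance by $2L^2\|\xx_i^{(t)}-\xx^*\|^2+2\,\E\|\nabla F_i(\xx^*,\xi_i)\|^2$ leaves a remainder of order $\frac{L^2\eta_t^2}{n}r_t$. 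The target coefficient of $r_t$ is exactly $1-\mu\eta_t$ (the inner product supplies precisely $-\mu\eta_t r_t$ and nothing more), so this remainder has no home except via $r_t\le\frac{2}{\mu}e_t$ into the spare $-\eta_t e_t$ budget; that absorption needs $\frac{L^2\eta_t^2}{n\mu}\lesssim\eta_t$, i.e.\ $\eta_t\lesssim\frac{n\mu}{L^2}$. The stepsize regime of Theorem~\ref{theo:stronglyconvex} only gives $\eta_t\lesssim\frac{1}{L}$, so your plan closes only when $n\gtrsim L/\mu$ --- a condition-number restriction the lemma does not assume.

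The missing idea is to anchor the noise at $\xx^*$ through co-coercivity rather than through squared smoothness. The paper (via Proposition~5 of~\cite{koloskova2020unified}, reproved as Lemma~\ref{eq:gradnorm} using Lemma~\ref{prop:comvex,smooth}) splits $\nabla F_i(\xx_i^{(t)},\xi_i)$ into three pieces, $\nabla F_i(\xx_i^{(t)},\xi_i)-\nabla F_i(\bbx^{(t)},\xi_i)$ (smoothness, giving $L^2V_t$), $\nabla F_i(\bbx^{(t)},\xi_i)-\nabla F_i(\xx^*,\xi_i)$, and $\nabla F_i(\xx^*,\xi_i)$ (giving $\bar\sigma^2$), and controls the middle piece by $\|\nabla F_i(\bbx^{(t)},\xi_i)-\nabla F_i(\xx^*,\xi_i)\|^2\le 2L\left(F_i(\bbx^{(t)},\xi_i)-F_i(\xx^*,\xi_i)-\langle\nabla F_i(\xx^*,\xi_i),\bbx^{(t)}-\xx^*\rangle\right)$, whose expectation collapses to $2Le_t$ because $\nabla f(\xx^*)=0$. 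The variance contribution then becomes $\frac{3L^2}{n}V_t+\frac{6L}{n}e_t+\frac{3\bar\sigma^2}{n}$: the troublesome $\frac{L^2}{n}r_t$ is replaced by $\frac{L}{n}e_t$, which absorbs into the $e_t$ budget under $\eta_t\le\frac{1}{10L}$ alone, uniformly in $n$ and $\mu$. With that single substitution your argument reproduces the paper's proof and its constants; without it, the lemma as stated does not follow.
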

	
	\begin{lemma}[Consensus Error Lemma]\label{lem:consensus}
		Let Assumption \ref{a:strong} hold. Then,
		\begin{align}\nonumber
			V_t\le H_{k(t)+1}\sum_{j=\tau_{k(t)}}^{t-1}\eta_j^2(12Le_j+6\bar{\sigma}^2).
		\end{align}
		where $k(t)$ is the index such that $\tau_{k(t)}\le t<\tau_{k(t)+1}$.
	\end{lemma}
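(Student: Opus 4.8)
The plan is to exploit the fact that at every communication instant all agents are re-synchronized, so the consensus error is reset to zero and can be reconstructed by unrolling the independent local updates that follow. Concretely, let $\tau=\tau_{k(t)}$ be the last communication time at or before $t$. By the averaging step of Algorithm~\ref{alg1} we have $\bx_i^{(\tau)}=\bbx^{(\tau)}$ for every $i$, hence $V_\tau=0$. For $\tau\le j<t$ the agents run the plain local recursion $\bx_i^{(j+1)}=\bx_i^{(j)}-\eta_j\bg_i^j$, so subtracting the averaged dynamics $\bbx^{(j+1)}=\bbx^{(j)}-\eta_j\bbg^{(j)}$ and telescoping gives the closed form $\bx_i^{(t)}-\bbx^{(t)}=-\sum_{j=\tau}^{t-1}\eta_j(\bg_i^j-\bbg^{(j)})$. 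This identity is what everything else rests on.

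Next I would bound the squared norm of this sum. The interval contains at most $t-\tau\le H_{k(t)+1}$ terms, so Jensen's inequality (equivalently Cauchy--Schwarz) yields $\norm{\bx_i^{(t)}-\bbx^{(t)}}^2\le (t-\tau)\sum_{j=\tau}^{t-1}\eta_j^2\norm{\bg_i^j-\bbg^{(j)}}^2$. Averaging over $i$, taking expectations, and using $t-\tau\le H_{k(t)+1}$ reduces the claim to controlling $\frac1n\sum_i\E\norm{\bg_i^j-\bbg^{(j)}}^2$ for each $j$ in the interval. Since $\bbg^{(j)}$ is the mean of the $\bg_i^j$, the variance-minimizing property of the mean gives $\frac1n\sum_i\norm{\bg_i^j-\bbg^{(j)}}^2\le\frac1n\sum_i\norm{\bg_i^j}^2$, so it suffices to show $\frac1n\sum_i\E\norm{\bg_i^j}^2\le 12Le_j+6\bar\sigma^2$.

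This last bound is the crux. I would split $\bg_i^j=\nabla F_i(\bx_i^{(j)},\xi_i^{(j)})$ against the stochastic gradient at the optimum, obtaining $\frac1n\sum_i\E\norm{\bg_i^j}^2\le\frac2n\sum_i\E\norm{\nabla F_i(\bx_i^{(j)},\xi_i^{(j)})-\nabla F_i(\bx^*,\xi_i^{(j)})}^2+2\bar\sigma^2$, where the second term is exactly $2\bar\sigma^2$ by Definition~\ref{eq:var_opt}. For the first term, $L$-smoothness together with convexity of each $F_i(\cdot,\xi)$ (the co-coercivity inequality $\norm{\nabla F_i(\bx,\xi)-\nabla F_i(\bx^*,\xi)}^2\le 2L\,[F_i(\bx,\xi)-F_i(\bx^*,\xi)-\lin{\nabla F_i(\bx^*,\xi),\bx-\bx^*}]$) converts gradient gaps into function gaps, and after taking expectation over $\xi$ the linear term averages out via $\frac1n\sum_i\nabla f_i(\bx^*)=\nabla f(\bx^*)=0$. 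In the idealized case where all local iterates coincide with $\bbx^{(j)}$ this produces precisely $4Le_j+2\bar\sigma^2$.

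The main obstacle is that the local iterates do \emph{not} coincide with $\bbx^{(j)}$: replacing $\frac1n\sum_i f_i(\bx_i^{(j)})$ by $f(\bbx^{(j)})$ through smoothness reintroduces the consensus error $V_j$, so the bound for $V_t$ feeds back on itself and does not close in one pass. I would resolve this with a bootstrap (discrete Gronwall) argument over the interval: substituting $G_j:=\frac1n\sum_i\E\norm{\bg_i^j}^2\le 4Le_j+2\bar\sigma^2+\O(L^2 V_j)$ into the recursion $V_t\le H_{k(t)+1}\sum_{j=\tau}^{t-1}\eta_j^2 G_j$, and noting that $V_\tau=0$ while the self-coefficient multiplying the $V_j$ terms is of order $(H_{k(t)+1}\eta L)^2$, which the step-size and interval-length conditions keep below a fixed constant, the recursion closes with only a bounded multiplicative inflation. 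That inflation is exactly what turns the clean constants $4L$ and $2$ into the stated $12L$ and $6$. The delicate points to watch are the constant accounting in this bootstrap and applying the co-coercivity inequality to $F_i(\cdot,\xi)$ rather than merely to $f_i$, so that the stochastic gradient $\bg_i^j$ is handled correctly before expectations are taken.
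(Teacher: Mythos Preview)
Your plan matches the paper's proof almost step for step: unroll from $\tau_{k(t)}$, apply Cauchy--Schwarz over at most $H_{k(t)+1}$ summands, bound $\frac{1}{n}\sum_i\E\|\bg_i^j\|^2$ by a combination of $e_j$, $\bar\sigma^2$ and a self-referential $L^2V_j$ term, and then close the recursion using the step-size/interval condition (the paper unrolls explicitly and picks up the inflation factor $(1+\tfrac{1}{3H_{k(t)+1}})^{H_{k(t)+1}}\le 2$, which is your ``bootstrap/Gronwall''). The one place your two-way split is looser than you wrote is the linear term: $\frac{1}{n}\sum_i\langle\nabla f_i(\bx^*),\bx_i^{(j)}-\bx^*\rangle$ does \emph{not} average to zero via $\nabla f(\bx^*)=0$, because the points $\bx_i^{(j)}$ differ across $i$; that cross-term must also be routed through $\bbx^{(j)}$ and absorbed into the $V_j$ budget, not only the function values you mentioned. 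The paper sidesteps this nuisance by a three-way split $\bg_i^j=[\nabla F_i(\bx_i^{(j)},\xi)-\nabla F_i(\bbx^{(j)},\xi)]+[\nabla F_i(\bbx^{(j)},\xi)-\nabla F_i(\bx^*,\xi)]+\nabla F_i(\bx^*,\xi)$: smoothness on the first bracket produces the $3L^2V_j$ term directly, and co-coercivity on the second bracket is applied at the \emph{common} point $\bbx^{(j)}$, where the linear term genuinely vanishes, giving $\frac{1}{n}\sum_i\E\|\bg_i^j\|^2\le 3L^2V_j+6Le_j+3\bar\sigma^2$ (Lemma~\ref{eq:gradnorm}) with clean constants and no extra bookkeeping.
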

	
	Using Lemma~\ref{lem:dec} and Lemma~\ref{lem:consensus}, we can now prove Theorem \ref{theo:stronglyconvex}.
	
	\begin{proof}[{\bf Proof of Theorem \ref{theo:stronglyconvex}}]
		For $\eta_t=\frac{2}{\mu(\beta+t)}$, it is easy to see that $(t+\beta)^2(1-
		\mu\eta_t) = (t+\beta)(t+\beta-2)\le (t+\beta-1)^2$. Thus, if we multiply both sides of the expression in Lemma \ref{lem:dec} by $(t+\beta)^2$, we can write 
		\begin{align*}
			(t+\beta)^2r_{t+1}&\le (t+\beta-1)^2r_t-(t+\beta)^2\eta_te_t+(t+\beta)^2\frac{3\bar{\sigma}^2}{n}\eta_t^2+(t+\beta)^22L\eta_tV_t\\
			&=(t+\beta-1)^2r_t-\frac{2(t+\beta)}{\mu}e_t+\frac{12\bar{\sigma}^2}{n\mu^2}+\frac{4L(t+\beta)}{\mu}V_t.
		\end{align*}
		Summing this relation over $t=0,\ldots,T-1$, we get
		\begin{align}\label{eq:3}
			\!\!\!\!(T+\beta-1)^2 r_T&\le (\beta-1)^2r_0-\sum_{t=0}^{T-1}\frac{2(t+\beta)}{\mu}e_t+\frac{12\bar{\sigma}^2T}{n\mu^2}+\frac{4L}{\mu}\sum_{t=0}^{T-1}(t+\beta)V_t.
		\end{align}
		Next, we use Lemma \ref{lem:consensus} to bound the last term $\sum_{t=0}^{T-1}(t+\beta)V_t$ in the above expression~\eqref{eq:3}. We have
		\begin{align}\label{eq:bound-on-V}
			\sum_{t=0}^{T-1}(t+\beta)V_t&\le \sum_{t=0}^{T-1}(t+\beta)H_{k(t)+1}\!\!\sum_{j=\tau_{k(t)}}^{t-1}\!\eta_j^2(12Le_j+6\bar{\sigma}^2)\cr 
			&= \sum_{j = 0}^{T-2}\eta_j^2(12Le_j+6\bar{\sigma}^2)\sum_{t=j+1}^{\tau_{k(j)+1}-1}(t+\beta)H_{k(t)+1}\cr 
			&= \sum_{j = 0}^{T-2}\eta_j^2(12Le_j+6\bar{\sigma}^2)H_{k(j)+1}\!\!\!\sum_{t=j+1}^{\tau_{k(j)+1}-1}\!\!(t+\beta),  
		\end{align}
		where the last equality holds because $k(t)=k(j)$ for any $t\in [j+1, \tau_{k(j)+1}-1]$. Moreover, using the assumption on the communication intervals, we have
		\begin{align*}
			\tau_{k(j)+1}-\tau_{k(j)}=H_{k(j)+1}\leq \beta+\sum_{\ell=1}^{k(j)}H_{\ell}=\beta+\tau_{k(j)},
		\end{align*}
		which implies $\tau_{k(j)+1}\leq \beta+2\tau_{k(j)}$. Using this relation together with $\tau_{k(j)}\le j<\tau_{k(j)+1}$, we can write
		\begin{align}\nonumber
			\sum_{t=j+1}^{\tau_{k(j)+1}-1}(t+\beta)&\leq \sum_{t=\tau_{k(j)}}^{\tau_{k(j)+1}-1}(t+\beta)\cr 
			&\leq (\tau_{k(j)+1}-\tau_{k(j)})(\beta+\frac{\tau_{k(j)+1}+\tau_{k(j)}}{2})\cr
			&\leq H_{k(j)+1}(\beta+\frac{\beta+3\tau_{k(j)}}{2})\leq \frac{3}{2} H_{k(j)+1}(j+\beta).
		\end{align} 
		Substituting this relation into~\eqref{eq:bound-on-V}, we get
		\begin{align}\nonumber
			\sum_{t=0}^{T-1}(t+\beta)V_t&\leq \sum_{j = 0}^{T-2}\eta_j^2(18Le_j+9\bar{\sigma}^2)(j+\beta)H^2_{k(j)+1}\cr 
			&=\sum_{t = 0}^{T-2}(\frac{72L}{\mu^2}e_t+\frac{36}{\mu^2}\bar{\sigma}^2)\frac{H^2_{k(t)+1}}{t+\beta}.
		\end{align}
		where in the second equality we have used $\eta_t=\frac{2}{\mu(\beta+t)}$ and relabeled the index $j$ by $t$. Finally, if we substitute the above relation into~\eqref{eq:3}, we obtain 
		\begin{align}\label{eq:error-term-strongly-convex}
			\!(T\!+\!\beta\!-\!1)^2 r_T\!-\!(\beta\!\!-1\!)^2r_0\!\le\! \frac{12\bar{\sigma}^2T}{n\mu^2}+\sum_{t=0}^{T-1}(\frac{288L^2}{\mu^3}\frac{H^2_{k(t)+1}}{t\!+\!\beta}-\frac{2(t+\beta)}{\mu})e_t\!+\!\frac{144L\bar{\sigma}^2}{\mu^3}\sum_{t = 0}^{T-2}\frac{H^2_{k(t)+1}}{t+\beta}.
		\end{align}   
		Now, using the condition on the length of communication intervals in the theorem statement, we know that
		\begin{align}\nonumber
			H_{k(t)+1}\leq \frac{\mu (\beta + \sum_{j = 1}^{k(t)}H_j)}{12L}=\frac{\mu (\beta+\tau_{k(t)})}{12L}\leq \frac{\mu (\beta+t)}{12L}.  
		\end{align}
		Substituting this bound in~\eqref{eq:error-term-strongly-convex} we obtain
		\begin{align}\nonumber
			(T+\beta-1)^2 r_T-(\beta-1)^2r_0
			&\le \frac{12\bar{\sigma}^2T}{n\mu^2}+\frac{144L\bar{\sigma}^2}{\mu^3}\sum_{t = 0}^{T-2}\frac{H^2_{k(t)+1}}{t+\beta}\cr
			&=\frac{12\bar{\sigma}^2T}{n\mu^2}+\frac{144L\bar{\sigma}^2}{\mu^3}\sum_{i=1}^{R}\sum_{t = \tau_{i-1}}^{\tau_{i}-1}\frac{H^2_{k(t)+1}}{t+\beta}\cr
			&\le\frac{12\bar{\sigma}^2T}{n\mu^2}+\frac{144L\bar{\sigma}^2}{\mu^3}\sum_{i=1}^{R}\frac{H^3_{i}}{\tau_{i-1}+\beta}\cr
			& =\frac{12\bar{\sigma}^2T}{n\mu^2}+\frac{144L\bar{\sigma}^2}{\mu^3}\sum_{i=1}^{R}\frac{H^3_{i}}{\sum_{j=1}^{i-1}H_j+\beta},
		\end{align}
		where the second equality holds because for any $t\in [\tau_{i-1}, \tau_{i})$, we have $k(t)+1=i$. Dividing both sides by $T^2$, we obtain the desired bound.
	\end{proof}
	
	\subsection{Proof of Theorem \ref{theo:convex}}
	\begin{proof}
		Let us set $\eta_t=\eta, \forall t$, for some parameter $\eta$ to be determined later. Substituting $\mu=0$ in Lemma \ref{lem:dec} and summing over $t=0,\ldots,T-1$, we get
		\begin{align}\label{eq:conv-5}
			\eta\sum_{t=0}^{T-1}e_t\le r_0-r_{T}+\frac{3\bar{\sigma}^2\eta^2T}{n}+2L\eta\sum_{t=0}^{T-1}V_t.
		\end{align}
		Next, we use Lemma \ref{lem:consensus} to bound $\sum_{t=0}^{T-1}V_t$. We have,
		\begin{align}\label{eq:V-bound-convex}
			\sum_{t=0}^{T-1}V_t&\le \sum_{t=0}^{T-1}H_{k(t)+1}\sum_{j=\tau_{k(t)}}^{t-1}\eta^2(12Le_j+6\bar{\sigma}^2)\cr 
			&\leq 12L\eta^2\sum_{j = 0}^{T-2}e_j\sum_{t=j+1}^{\tau_{k(j)+1}-1}H_{k(t)+1}+6\bar{\sigma}^2\eta^2\sum_{t=0}^{T-1}H_{k(t)+1}(t-\tau_{k(t)})\cr
			&\le 12L\eta^2\sum_{j = 0}^{T-2}e_jH_{k(j)+1}^2\!+\!6\bar{\sigma}^2\eta^2\sum_{t=0}^{T-1}H_{k(t)+1}^2\cr 
			&\le\frac{1}{4L}\sum_{t = 0}^{T-2}e_t+\!6\bar{\sigma}^2\eta^2\sum_{i=0}^{R}H_{i}^3,
		\end{align}
		where in the third inequality we have used the fact that $k(t)=k(j)$ for any $t\in[j+1, \tau_{k(j)+1}-1]$, and $t-\tau_{k(t)}\leq H_{k(t)+1}$, and in the last inequality we have used $H_i\le \frac{1}{7L\eta},\forall i$. Now, we can write
		\begin{align*}
			&\frac{1}{2}\sum_{t=0}^{T-1}e_t\le \frac{r_0-r_T}{\eta}+\frac{3\bar{\sigma}^2\eta T}{n}+\!12L\bar{\sigma}^2\eta^2\sum_{i=0}^{R}H_{i}^3.
		\end{align*}
		Dividing both sides of the above inequality by $T$ and using the choice of $\eta=c\sqrt{\frac{n}{T}}$, we obtain
		\begin{align*}
			\frac{1}{T}\sum_{t=0}^{T-1}e_t\le \frac{2r_0+6c^2\bar{\sigma}^2}{c\sqrt{nT}}+\frac{24L\bar{\sigma}^2c^2n}{T^2}\sum_{i = 1}^RH_i^3.
		\end{align*}
		
	\end{proof}
	
	\subsection{Proof of Theorem \ref{theo:nonconvex}}
	To prove Theorem \ref{theo:nonconvex}, we first establish an analogous descent lemma and consensus error lemma for the case of nonconvex local functions. The proofs of these lemmas are given in Appendix~\ref{appen:lemma}.
	\begin{lemma}[Decent Lemma, Non-Convex]\label{lem:dec_nonconvex}
		Assume that $\nabla F_i(\xx,\xi_i)$ is an unbiased stochastic gradient of $f_i(\xx)$ with variance bounded by $\sigma^2$. We have
		\begin{align}\nonumber
			e_{t+1}\le e_t-\frac{\eta_t}{4}h_t+\frac{L\sigma^2}{2n}\eta_t^2+L^2\eta_tV_t.
		\end{align}
	\end{lemma}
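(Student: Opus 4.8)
The plan is to run the textbook smooth--nonconvex descent argument on the \emph{averaged} iterate $\bbx^{(t)}$, carrying the consensus error $V_t$ as the only extra term. The enabling observation is that averaging the update of Algorithm~\ref{alg1} over the agents gives the same recursion $\bbx^{(t+1)}=\bbx^{(t)}-\eta_t\bbg^{(t)}$ at every iteration, whether or not $t+1\in\I$: in the non-communication case this is immediate, and at a communication step $\bx_i^{(t+1)}=\bbx^{(t)}-\eta_t\bg_i^t$ so $\bbx^{(t+1)}=\bbx^{(t)}-\eta_t\bbg^{(t)}$ as well. Hence the whole argument is uniform in $t$.

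First I would apply $L$-smoothness of $f$ to obtain $f(\bbx^{(t+1)})\le f(\bbx^{(t)})-\eta_t\langle\nabla f(\bbx^{(t)}),\bbg^{(t)}\rangle+\tfrac{L\eta_t^2}{2}\|\bbg^{(t)}\|^2$, and then take expectation conditioned on the iterates $\{\bx_i^{(t)}\}_{i\in[n]}$. By unbiasedness, $\E[\bbg^{(t)}\mid\{\bx_i^{(t)}\}]=\bbg_f^{(t)}:=\tfrac1n\sum_i\nabla f_i(\bx_i^{(t)})$, so the inner-product term becomes $-\eta_t\langle\nabla f(\bbx^{(t)}),\bbg_f^{(t)}\rangle$; for the second moment, using that the stochastic-gradient noises are independent across agents with variance at most $\sigma^2$, I get $\E[\|\bbg^{(t)}\|^2\mid\{\bx_i^{(t)}\}]\le\|\bbg_f^{(t)}\|^2+\sigma^2/n$.

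The heart of the argument — the step where $V_t$ enters — is controlling the mismatch between the gradient at the average, $\nabla f(\bbx^{(t)})=\tfrac1n\sum_i\nabla f_i(\bbx^{(t)})$, and the average of the local gradients $\bbg_f^{(t)}$. By Jensen's inequality and $L$-smoothness of each $f_i$, $\|\bbg_f^{(t)}-\nabla f(\bbx^{(t)})\|^2\le\tfrac1n\sum_i\|\nabla f_i(\bx_i^{(t)})-\nabla f_i(\bbx^{(t)})\|^2\le\tfrac{L^2}{n}\sum_i\|\bx_i^{(t)}-\bbx^{(t)}\|^2$, whose expectation is exactly $L^2V_t$. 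I would then combine $-\langle\nabla f(\bbx^{(t)}),\bbg_f^{(t)}\rangle\le-\tfrac12\|\nabla f(\bbx^{(t)})\|^2+\tfrac12\|\nabla f(\bbx^{(t)})-\bbg_f^{(t)}\|^2$ with $\|\bbg_f^{(t)}\|^2\le2\|\nabla f(\bbx^{(t)})\|^2+2\|\nabla f(\bbx^{(t)})-\bbg_f^{(t)}\|^2$, collect terms, and use a mild step-size restriction $L\eta_t\le\tfrac14$ (automatically satisfied in Theorem~\ref{theo:nonconvex}, since $\eta_t=c\sqrt{n/T}$ and $H_i\le\frac{1}{7LB\eta}$ with $H_i\ge1$, $B\ge1$ force $\eta\le\frac1{7L}$) so that the coefficient of $\|\nabla f(\bbx^{(t)})\|^2$ is at most $-\eta_t/4$ while that of $\|\nabla f(\bbx^{(t)})-\bbg_f^{(t)}\|^2$ stays at most $\eta_t$. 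Taking full expectations and subtracting $f(\bx^*)$ then gives $e_{t+1}\le e_t-\tfrac{\eta_t}{4}h_t+L^2\eta_tV_t+\tfrac{L\sigma^2}{2n}\eta_t^2$.

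I expect the only genuine subtlety to be the bookkeeping around the $\|\bbg_f^{(t)}\|^2$ term: one must spend part of the descent (through the $\eta_t^2$ term) to absorb it, which is exactly why a step-size condition is needed and why the gradient-norm coefficient degrades from $1/2$ to $1/4$. Everything else is the routine smooth-descent $+$ variance-decomposition $+$ Jensen chain, so no further difficulties are anticipated.
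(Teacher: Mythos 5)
Your proposal is correct and follows essentially the same route as the paper's proof: smoothness descent on the averaged iterate, unbiasedness to reduce the inner product to the average of true local gradients, the independence/variance decomposition giving the $\sigma^2/n$ term, Jensen plus $L$-smoothness to bound the gradient mismatch by $L^2V_t$, and the step-size condition $\eta_t\le\frac{1}{4L}$ to degrade the coefficient from $\frac12$ to $\frac14$. The only cosmetic difference is that you phrase the inner-product bound via $\|\nabla f(\bbx^{(t)})-\bbg_f^{(t)}\|^2$ while the paper bounds it directly by $\frac{1}{2}h_t-\frac{L^2}{2}V_t$; the resulting estimates are identical.
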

	
	\begin{lemma}[Consensus Error Lemma, Non-Convex]\label{lem:consensus_nonconvex}
		Let Assumption \ref{a:nonconvex} hold. Moreover, assume that $\nabla F_i(\xx,\xi_i)$ is an unbiased stochastic gradient of $f_i(\xx)$ with variance bounded by $\sigma^2$. For any $t$, define $k(t)$ be the index such that $\tau_{k(t)}\le t<\tau_{k(t)+1}$. We have
		\begin{align}\nonumber
			V_t\le H_{k(t)+1}\sum_{j=\tau_{k(t)}}^{t-1}6\eta_j^2(B^2h_j+\sigma^2+G^2).
		\end{align}
	\end{lemma}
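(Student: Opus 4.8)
The plan is to track how the consensus error accumulates during the local-step phase that follows the most recent communication. Write $\tau := \tau_{k(t)}$ for the last communication time at or before $t$. Because the averaging step enforces exact consensus at every communication round, $\xx_i^{(\tau)} = \bbx^{(\tau)}$ for all $i$, so $V_\tau = 0$ and the stated sum is empty when $t=\tau$. For $t$ in the open interval $(\tau,\tau_{k(t)+1})$ all updates between $\tau$ and $t$ are purely local, so unrolling $\xx_i^{(s+1)} = \xx_i^{(s)} - \eta_s\bg_i^s$ together with its average $\bbx^{(s+1)} = \bbx^{(s)} - \eta_s\bbg^{(s)}$ gives
\[
\xx_i^{(t)} - \bbx^{(t)} = -\sum_{j=\tau}^{t-1}\eta_j\big(\bg_i^j - \bbg^{(j)}\big),
\]
where the boundary terms cancel precisely because of consensus at $\tau$.

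First I would apply the Cauchy--Schwarz (Jensen) inequality to this sum of $t-\tau$ vectors. Since $t-\tau < \tau_{k(t)+1}-\tau_{k(t)} = H_{k(t)+1}$, the number of summands is at most $H_{k(t)+1}$, so $\|\xx_i^{(t)} - \bbx^{(t)}\|^2 \le H_{k(t)+1}\sum_{j=\tau}^{t-1}\eta_j^2\|\bg_i^j - \bbg^{(j)}\|^2$. Averaging over $i$, taking expectations, and using that subtracting the mean can only decrease the averaged squared norm, i.e. $\frac1n\sum_i\|\bg_i^j - \bbg^{(j)}\|^2 \le \frac1n\sum_i\|\bg_i^j\|^2$, reduces the task to bounding $\frac1n\sum_i\E\|\bg_i^j\|^2$ for each $j$ in the interval.

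Next I would surface the three quantities appearing in the claim. Conditioning on the iterates and invoking the bounded-variance assumption yields $\E\|\bg_i^j\|^2 \le \E\|\nabla f_i(\xx_i^{(j)})\|^2 + \sigma^2$. To turn the deterministic gradients evaluated at the \emph{local} points $\xx_i^{(j)}$ into $h_j=\|\nabla f(\bbx^{(j)})\|^2$, I split $\nabla f_i(\xx_i^{(j)}) = \nabla f_i(\bbx^{(j)}) + \big(\nabla f_i(\xx_i^{(j)}) - \nabla f_i(\bbx^{(j)})\big)$, apply $\|a+b\|^2\le 2\|a\|^2+2\|b\|^2$, bound the difference by $L$-smoothness and the leading piece by the $(G,B)$-BGD Assumption~\ref{a:nonconvex}. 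This produces
\[
\frac1n\sum_i\E\|\bg_i^j\|^2 \;\le\; \sigma^2 + 2G^2 + 2B^2 h_j + 2L^2 V_j .
\]

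The main obstacle is the residual $2L^2 V_j$ term, which makes the bound recursive in the consensus errors at the earlier times of the same interval, whereas the target statement contains no $V_j$. I would close this by setting $M = \max_{\tau\le s\le t}V_s$, bounding each $V_j$ in the recursion by $M$, and using the interval-length condition $H_{k(t)+1}\le \frac{1}{7LB\eta}$ (with $B\ge1$) to show that the aggregated recursive contribution is at most $2L^2\eta^2 H_{k(t)+1}^2 M \le \frac{2}{49}M$. Taking the maximum over $s$ on both sides and rearranging absorbs the recursive term, leaving
\[
V_t \le M \le H_{k(t)+1}\sum_{j=\tau}^{t-1}\eta_j^2\cdot 6\big(B^2 h_j + \sigma^2 + G^2\big),
\]
where the loose constant $6$ (as opposed to the $2$ coming from the smoothness split) supplies exactly the slack needed to swallow the $\tfrac{49}{47}$ absorption factor. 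This is the claimed bound.
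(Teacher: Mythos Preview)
Your argument is correct and follows the same opening as the paper---unroll the local steps since the last communication, apply Cauchy--Schwarz over the $t-\tau\le H_{k(t)+1}$ summands, drop the mean $\bbg^{(j)}$, and reduce to bounding $\tfrac1n\sum_i\E\|\bg_i^j\|^2$---but it diverges in two places. First, you split $\bg_i^j$ into the noise part and then a \emph{two}-term decomposition of $\nabla f_i(\xx_i^{(j)})$, yielding $\sigma^2+2G^2+2B^2h_j+2L^2V_j$; the paper instead proves a separate auxiliary lemma with a \emph{three}-term split that gives the symmetric bound $3(L^2V_j+B^2h_j+\sigma^2+G^2)$. Second, and more substantively, you close the recursion by the max-and-absorb trick: set $M=\max_{\tau\le s\le t}V_s$, bound every $V_j$ by $M$, and use $H_{k(t)+1}\le\frac{1}{7LB\eta}$ to make the self-referential term at most $\tfrac{2}{49}M$, which is then absorbed. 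The paper instead unrolls the recursion $V_t\le \frac{1}{3H_{k(t)+1}}\sum_{j=\tau}^{t-1}V_j+(\text{rest})$ step by step and controls the resulting geometric factor via $(1+\tfrac{1}{3H})^{H}\le 2$. Your route is shorter and actually gives a tighter constant ($\approx 2.1$ versus $6$), which you then relax to $6$ to match the stated lemma; the paper's unrolling is a bit more mechanical but makes the dependence on the step-size condition $\eta_j\le\frac{1}{3LH_{k(t)+1}}$ explicit without assuming a constant step size. Both proofs rely on essentially the same step-size bound, which is not part of the lemma statement but is supplied by the hypotheses of Theorem~\ref{theo:nonconvex}.
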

	
	\begin{proof}[{\bf Proof of Theorem \ref{theo:nonconvex}}]
		Let us choose $\eta_t=\eta$, for some $\eta$ to be specified later. By summing Lemma \ref{lem:dec_nonconvex} over $t=0,\ldots,T-1$, we get
		\begin{align}
			\frac{\eta}{4}\sum_{t=0}^{T-1}h_t\le e_0-e_{T}+\frac{L\sigma^2\eta^2T}{2n}+L^2\eta\sum_{t=0}^{T-1}V_t.
			\label{eq:5_nonconvex}
		\end{align}
		Next, we use Lemma \ref{lem:consensus_nonconvex} to bound $\sum_{t=0}^{T-1}V_t$. Using the same idea as in deriving expression~\eqref{eq:V-bound-convex} in the proof of Theorem \ref{theo:convex}, we can get
		\begin{align*}
			\sum_{t=0}^{T-1}V_t&\le \sum_{t=0}^{T-1}H_{k(t)+1}\sum_{j=\tau_{k(t)}}^{t-1}6\eta^2(B^2h_j+\sigma^2+G^2)\cr 
			&\leq 6B^2\eta^2\sum_{j = 0}^{T-2}e_j\sum_{t=j+1}^{\tau_{k(j)+1}-1}H_{k(t)+1}+6(\sigma^2+G^2)\eta^2\sum_{t=0}^{T-1}H_{k(t)+1}(t-\tau_{k(t)})\cr
			&\le 6B^2\eta^2\sum_{j = 0}^{T-2}e_jH_{k(j)+1}^2+6(\sigma^2+G^2)\eta^2\sum_{t=0}^{T-1}H_{k(t)+1}^2\cr 
			&\le\frac{1}{8L^2}\sum_{t = 0}^{T-2}h_t+\!6(\sigma^2+G^2)\eta^2\sum_{i=0}^{R}H_{i}^3,
		\end{align*}
		where in the last inequality we have used $H_i\le \frac{1}{7BL\eta},\forall i$. Now, we can write
		\begin{align*}
			\frac{1}{8}\sum_{t=0}^{T-1}h_t\le \frac{e_0-e_T}{\eta}+\frac{L\sigma^2\eta T}{2n}+6L^2(\sigma^2\!+\!G^2)\eta^2\!\sum_{i=0}^{R}H_{i}^3.
		\end{align*}
		Substituting $\eta_t =c\sqrt{\frac{n}{T}}$ into the above inequality and dividing both sides by $T$ we get the desired bound.
	\end{proof}
	
	\subsection{Proof of Lemmas}\label{appen:lemma}
	\begin{proof}[{\bf Proof of Lemma \ref{lem:dec}}] 
		Consider the filtration $\{\F^t\}_{t=1}^{\infty}$ adapted to the history of random variables $\{\xi_i^{(t)}\}$, i.e., 
		\begin{align}\nonumber
			\F^t=\{\xi_i^{(k)} | i\in [n], 0\leq k \leq t-1 \},
		\end{align}
		and note that $\bg_i^{t-1}=\nabla F_i(\bx_i^{t-1},\xi_i^{(t-1)})$ and $\bx_i^{t}$ are $\F^t$-measurable, but $\bg_i^t$ is not.  Using the definition of $r_t= \E\|\bbx^{(t)} -\bx^*\|^2$ and $\bbg^{(t)}=\frac{1}{n}\sum_{i=1}^n\bg_i^t$, we have
		\begin{align*}
			r_{t+1}&= \E[\|\bbx^{(t+1)} -\bx^*\|^2]\cr 
			&=\E[\|\bbx^{(t)} -\eta_t\bbg^{(t)}-\bx^*\|^2]\cr
			&=\E\big[\big\|\bbx^{(t)}-\bx^*-\frac{\eta_t}{n}\sum_{i=1}^n\nabla f_i(\bx_i^{t})-\frac{\eta_t}{n}\sum_{i=1}^n(\bbg^{(t)}-\nabla f_i(\bx_i^{t}))\big\|^2\big]\cr 
			&=\E\big[\big\|\bbx^{(t)} -\bx^*-\frac{\eta_t}{n}\sum_{i=1}^n\nabla f_i(\bx_i^{t})\big\|^2\big]\eta_t^2\E\big[\big\|\frac{1}{n}\sum_{i=1}^n(\bbg^{(t)}-\nabla f_i(\bx_i^{t}))\big\|^2\big]\cr
			&-\frac{2\eta_t}{n}\mathbb{E}_{\mathcal{F}^t}\big[\big\langle \bbx^{(t)}-\bx^*-\frac{\eta_t}{n}\sum_{i=1}^n\!\nabla f_i(\bx_i^{t}), \mathbb{E}\big[\sum_{i=1}^n(\bbg^{(t)}-\nabla f_i(\bx_i^{t}))\big|\mathcal{F}^{t}\big]\big\rangle\big],
		\end{align*}
		where the last term is obtained by first conditioning on $\mathcal{F}^t$ and then taking expectation with respect to $\mathcal{F}^t$. However, the last inner product in the above expression is zero, we have $\E[\bg_i^{t} |\mathcal{F}^{t}] = \nabla f_i(\bx_i^{t}), \forall i$, and thus $\mathbb{E}[\sum_{i=1}^n(\bbg^{(t)}\!-\!\nabla f_i(\bx_i^{t}))|\mathcal{F}^{t}]=0$.
		Therefore, we have
		\begin{align}\label{eq:E-r-t}
			r_{t+1}=\E\big[\big\|\bbx^{(t)} -\bx^*-\frac{\eta_t}{n}\sum_{i=1}^n\nabla f_i(\bx_i^{t})\big\|^2\big]+\eta_t^2\E\big[\big\|\frac{1}{n}\sum_{i=1}^n(\bbg^{(t)}-\nabla f_i(\bx_i^{t}))\big\|^2\big].
		\end{align}
		We can bound the second term in equation~\eqref{eq:E-r-t} using~\cite[Proposition 5]{koloskova2020unified} as the following: 
		\begin{align}\nonumber
			\mathbb{E}\big[\big\|\frac{1}{n}\sum_{i=1}^n(\bbg^{(t)}-\nabla f_i(\bx_i^{t}))\big\|^2\big]
			&\leq \frac{3L^2}{n^2} \!\sum_{i = 1}^{n}\mathbb{E}\|\xx_i - \bar{\xx}\|^2 \!+\! \frac{6L}{n}\left(\mathbb{E}[f(\bar{\xx})] \!-\! f(\xx^\star)\right) \!+\! \frac{3 \bar{\sigma}^2}{n}\cr 
			&=\frac{3L^2}{n^2} V_t+ \frac{6L}{n}e_t+ \frac{3 \bar{\sigma}^2}{n}.
		\end{align}
		In order to bound the first term in~\eqref{eq:E-r-t}, we can write
		\begin{align}\label{eq:first-term-descent}
			&\E\|\bbx^{(t)} -\bx^*-\frac{\eta_t}{n}\sum_{i=1}^n\nabla f_i(\bx_i^{t})\|^2\cr=&\E\|\bbx^{(t)} -\bx^*\|^2+\frac{\eta_t^2}{n^2}\E\|\sum_{i=1}^n\nabla f_i(\bx_i^{t})\|^2-\frac{2\eta_t}{n}\E \langle \bbx^{(t)} -\bx^*,\sum_{i=1}^n\nabla f_i(\bx_i^{t})\rangle.
		\end{align}
		To bound $\E\|\sum_{i=1}^n\nabla f_i(\bx_i^{t})\|^2$ in~\eqref{eq:first-term-descent}, we can write
		\begin{align}\nonumber
			\E\|\sum_{i=1}^n\nabla f_i(\bx_i^{t})\|^2 &\!\le\! 2\E\|\!\sum_{i=1}^n\!\nabla f_i(\bx_i^{t})\!-\!\sum_{i=1}^n\nabla f_i(\bbx^{t})\|^2+2\E\|\!\sum_{i=1}^n\!\nabla f_i(\bbx^{t})\!-\!\sum_{i=1}^n\!\nabla f_i(\bx_*^{t})\|^2\cr
			&\le 2n\E\sum_{i = 1}^n\|\nabla f_i(\bx_i^{t})-\nabla f_i(\bbx^{t})\|^2+2n\E\sum_{i = 1}^n\|\nabla f_i(\bbx^{t})-\nabla f_i(\bx_*^{t})\|^2\cr
			&\le 2nL^2 \E\sum_{i=1}^n\|\xx_i^{(t)}-\bbx^{(t)}\|^2+4n^2L(\E f(\bbx^{(t)})-f(\bx^*))\cr 
			&=2n^2L^2 V_t+4n^2Le_t,
		\end{align}
		where in the last inequality, we have used Lemma \ref{prop:comvex,smooth}. To bound $\E \langle \bbx^{(t)} -\bx^*,\sum_{i=1}^n\nabla f_i(\bx_i^{t})\rangle$ in~\eqref{eq:first-term-descent}, we have
		\begin{align}\nonumber
			&\E\langle \bbx^{(t)}\!-\!\bx^*,\sum_{i=1}^n\!\nabla f_i(\bx_i^{t})\rangle\!\cr=&\!\sum_{i=1}^n \E\langle \bbx^{(t)} \!-\!\bx^*,\nabla f_i(\bx_i^{t})\rangle \cr 
			=&\sum_{i=1}^n \E\langle \bbx^{(t)} -\bx_i^{(t)},\nabla f_i(\bx_i^{t})\rangle+\sum_{i=1}^n \E\langle \bx_i^{(t)} -\bx^*,\nabla f_i(\bx_i^{t})\rangle\cr   
			\ge&\sum_{i = 1}^n\E\big[f_i(\bbx^{(t)})-f_i(\bx_i^{(t)})-\frac{L}{2}\|\bbx^{(t)}-\bx_i^{(t)}\|^2\big]+\sum_{i = 1}^n\E\big[f_i(\bx_i^{(t)})-f_i(\bx^{*})+\frac{\mu}{2}\|\bx_i^{(t)}-\bx^{*}\|^2\big]\cr
			=& \sum_{i = 1}^n \E[f_i(\bbx^{(t)})-f_i(\bx^*)]+\frac{\mu n}{2}\E\|\bbx^{(t)}-\bx^{*}\|^2-(\frac{L-\mu}{2})\sum_{i = 1}^n\E\|\bx_i^{(t)}-\bbx^{(t)}\|^2\cr 
			=&ne_t+\frac{\mu}{2}n r_t-(\frac{L-\mu}{2})nV_t,
		\end{align}
		where the first inequality follows from the strong convexity assumption and Lemma \ref{prop:comvex,smooth}. Moreover, in the last equality, we used Lemma \ref{eq:algebra}. Finally, if we put the above bounds into~\eqref{eq:first-term-descent} and substitute the result into~\eqref{eq:E-r-t}, we obtain
		\begin{align*}
			r_{t+1}&\!\le\! r_t\!+\!\frac{\eta_t^2}{n^2}(2n^2L^2V_t\!+\!4n^2Le_t)\!-\!\frac{2\eta_t}{n}(ne_t\!+\!\frac{\mu}{2}nr_t\!-\!\frac{L-\mu}{2}nV_t)+\eta_t^2(\frac{3L^2}{n}V_t+\frac{6L}{n}e_t+\frac{3\bar{\sigma}^2}{n})\\
			&=(1-\mu\eta_t )r_t-(2\eta_t-4\eta_t^2L-\frac{6L\eta_t^2}{n})e_t+(2L^2\eta_t^2+(L-\mu)\eta_t+\frac{3L^2\eta_t^2}{n})V_t+\frac{3\bar\sigma^2\eta_t^2}{n}\\
			&\le (1-\mu\eta_t )r_t-\eta_te_t+2L\eta_tV_t+\frac{3\bar\sigma^2\eta_t^2}{n},
		\end{align*}
		where the last inequality holds because $\eta_t\le \frac{1}{10L}$.
	\end{proof}
	
	In order to prove the consensus error lemma (Lemma \ref{lem:consensus}), we first state and prove the following auxiliary lemma, which bounds the expected sum of the gradient norms across all agents. 
	
	\begin{lemma}\label{eq:gradnorm}
		For strongly convex $L$-smooth local functions, we have
		\begin{align}\nonumber
			\frac{1}{n}\sum_{i = 1}^n\E\|\nabla F_i(\xx_i^{(t)}\!, \xi_i^{(t)})\|^2\le 3L^2V_t+6Le_t+3\bar{\sigma}^2.
		\end{align}
	\end{lemma}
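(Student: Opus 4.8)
The plan is to bound each stochastic gradient by comparing it to its value at the common minimizer $\bx^*$ of $f$, exploiting two facts: $\nabla f(\bx^*)=0$ (so any term linear in $\bbx^{(t)}-\bx^*$ averages to zero across agents), and $\frac1n\sum_{i=1}^n\E\|\nabla F_i(\bx^*,\xi_i^{(t)})\|^2=\bar{\sigma}^2$ by Definition~\ref{eq:var_opt} (since $\xi_i^{(t)}\sim\cD_i$). The only tools required are Young's inequality, $L$-smoothness of each realization $F_i(\cdot,\xi_i^{(t)})$, and the standard smoothness--convexity (co-coercivity) inequality $\|\nabla g(\xx)-\nabla g(\yy)\|^2\le 2L\big(g(\xx)-g(\yy)-\langle\nabla g(\yy),\xx-\yy\rangle\big)$ valid for convex $L$-smooth $g$. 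This is essentially the content of \cite[Proposition~5]{koloskova2020unified}, already invoked in the proof of Lemma~\ref{lem:dec}; I would give the self-contained argument here.

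First I would split the stochastic gradient through the intermediate point $\bbx^{(t)}$ and the anchor $\bx^*$, writing $\nabla F_i(\xx_i^{(t)},\xi_i^{(t)})=\big[\nabla F_i(\xx_i^{(t)},\xi_i^{(t)})-\nabla F_i(\bbx^{(t)},\xi_i^{(t)})\big]+\big[\nabla F_i(\bbx^{(t)},\xi_i^{(t)})-\nabla F_i(\bx^*,\xi_i^{(t)})\big]+\nabla F_i(\bx^*,\xi_i^{(t)})$ and applying $\|a+b+c\|^2\le 3(\|a\|^2+\|b\|^2+\|c\|^2)$. The first piece is controlled by $L$-smoothness of $F_i(\cdot,\xi_i^{(t)})$, giving $L^2\|\xx_i^{(t)}-\bbx^{(t)}\|^2$, so that $\frac1n\sum_i\E[\cdot]$ of it is exactly $L^2V_t$, contributing $3L^2V_t$. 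The third piece, after $\frac1n\sum_i\E[\cdot]$, is $\bar{\sigma}^2$ by Definition~\ref{eq:var_opt}, contributing $3\bar{\sigma}^2$.

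The work is in the middle piece $\frac1n\sum_i\E\|\nabla F_i(\bbx^{(t)},\xi_i^{(t)})-\nabla F_i(\bx^*,\xi_i^{(t)})\|^2$. Here I would apply the co-coercivity inequality to $g=F_i(\cdot,\xi_i^{(t)})$ at $\xx=\bbx^{(t)}$, $\yy=\bx^*$; then, conditioning on $\F^t$ so that $\bbx^{(t)}=\frac1n\sum_j\xx_j^{(t)}$ is frozen (it is $\F^t$-measurable, hence independent of $\xi_i^{(t)}$), the conditional expectation over $\xi_i^{(t)}$ turns $F_i$ into $f_i$ and $\nabla F_i(\bx^*,\xi_i^{(t)})$ into $\nabla f_i(\bx^*)$; averaging over $i$ produces $2L\big(f(\bbx^{(t)})-f(\bx^*)-\langle\nabla f(\bx^*),\bbx^{(t)}-\bx^*\rangle\big)=2L\big(f(\bbx^{(t)})-f(\bx^*)\big)$ since $\nabla f(\bx^*)=0$, and the outer expectation gives $2Le_t$, hence $6Le_t$. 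Summing the three contributions yields the claimed $3L^2V_t+6Le_t+3\bar{\sigma}^2$. I expect the only genuinely delicate step to be the measurability bookkeeping in this middle piece — freezing $\bbx^{(t)}$ before averaging out $\xi_i^{(t)}$ so that $\E[F_i(\bbx^{(t)},\xi_i^{(t)})\mid\F^t]=f_i(\bbx^{(t)})$ — together with the fact that co-coercivity must be applied \emph{pathwise}, i.e.\ to each realization $F_i(\cdot,\xi_i^{(t)})$ rather than to $f_i$; everything else is a mechanical application of Young's inequality, smoothness, and $\nabla f(\bx^*)=0$.
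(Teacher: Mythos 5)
Your proof is correct and follows essentially the same route as the paper's: the identical three-term decomposition through $\bbx^{(t)}$ and $\bx^*$ with Young's inequality, $L$-smoothness for the consensus piece, Definition~\ref{eq:var_opt} for the $3\bar{\sigma}^2$ piece, and pathwise co-coercivity (the paper's Lemma~\ref{prop:comvex,smooth}) on each realization $F_i(\cdot,\xi_i^{(t)})$ followed by taking expectations and using $\nabla f(\bx^*)=0$ to obtain $6Le_t$. The measurability point you flag is exactly the step the paper handles implicitly when it replaces $\E[F_i(\bbx^{(t)},\xi_i^{(t)})]$ by $\E[f_i(\bbx^{(t)})]$.
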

	\begin{proof}
		Starting from the left-hand side, we can write
		\begin{align}\label{lemm:intermediate-consensus}
			&\frac{1}{n}\sum_{i = 1}^n\E\|\nabla F_i(\xx_i^{(t)}, \xi_i^{(t)})\|^2\cr
			=&\frac{1}{n}\sum_{i = 1}^n\E\big\|\nabla F_i(\xx_i^{(t)}\!, \xi_i^{(t)})-\nabla F_i(\bbx^{(t)}\!, \xi_i^{(t)})+\!\nabla F_i(\bbx^{(t)}\!, \xi_i^{(t)})\!-\!\nabla F_i(\xx^*\!, \xi_i^{(t)})\!+\!\nabla F_i(\xx^*\!, \xi_i^{(t)})\big\|^2\cr
			\le& \frac{3}{n}\sum_{i = 1}^n\E\|\nabla F_i(\bbx^{(t)}\!, \xi_i^{(t)})-\nabla F_i(\xx^*\!, \xi_i^{(t)})\|^2+\frac{3}{n}\sum_{i = 1}^n\E\|\nabla F_i(\xx_i^{(t)}\!, \xi_i^{(t)})-\nabla F_i(\bbx^{(t)}\!, \xi_i^{(t)})\|^2+3\bar{\sigma}^2\cr
			\le&\frac{3}{n}\sum_{i = 1}^n\E\|\nabla F_i(\bbx^{(t)}\!, \xi_i^{(t)})-\nabla F_i(\xx^*\!, \xi_i^{(t)})\|^2+3L^2V_t+3\bar{\sigma}^2,
		\end{align}
		where the first inequality uses Definition \ref{eq:var_opt}, and the second inequality uses $L$-smooth assumption. We have
		\begin{align*}
			&\sum_{i = 1}^n\E\|\nabla F_i(\bbx^{(t)}\!, \xi_i^{(t)})-\nabla F_i(\bx^*\!, \xi_i^{(t)})\|^2\cr
			\le& 2L\sum_{i = 1}^n\E\big[F_i(\bbx^{(t)}\!, \xi_i^{(t)})-F_i(\bx^*\!, \xi_i^{(t)})\big]-2L\sum_{i = 1}^n\E\langle\nabla F_i(\bx^*\!, \xi_i^{(t)}),\bbx^{(t)}-\bx^*\rangle\cr
			=&2L\sum_{i = 1}^n\E\big[f_i(\bbx^{(t)})-f_i(\bx^*)\big]-2L \sum_{i = 1}^n\E\langle\nabla f_i(\bx^*),\bbx^{(t)}-\bx^*\rangle\cr 
			=&2nL\E\big[f(\bbx^{(t)})-f(\bx^*)-\langle\nabla f(\bx^*),\bbx^{(t)}-\bx^*\rangle\big]\cr
			=& 2nL\E\big[f(\bbx^{(t)})-f(\bx^*)\big]= 2nLe_t,
		\end{align*}
		where the first inequality uses Lemma \ref{prop:comvex,smooth}, and the last equality holds because $\bx^*$ is the global minimum of $f$, and hence $\nabla f(\bx^*)=0$. Substituting the above relation into~\eqref{lemm:intermediate-consensus} completes the proof. 
	\end{proof}

	\begin{proof}[{\bf Proof of Lemma \ref{lem:consensus}}]
		As $\tau_{k(t)}\leq t< \tau_{k(t)+1}$, agents do not communicate during the time interval $(\tau_{k(t)}, t]$ and only perform local gradient steps. Thus, we have 
		\begin{align}\nonumber
			\xx_i^{(t)}=\xx_i^{(\tau_{k(t)})}-\sum_{j=\tau_{k(t)}}^{t-1}\eta_j\nabla F_i(\bx_i^{(j)},\xi_i^{(j)}). 
		\end{align}
		Moreover, at the communication time $\tau_{k(t)}$, all the agents update their local vectors to the same average vector received from the center node. Therefore, $\bbx^{\tau_{k(t)}}=\xx_i^{(\tau_{k(t)})} \forall i$, and we have
		\begin{align*}
			&\bbx^{t}=\bbx^{\tau_{k(t)}}-\!\!\!\sum_{j=\tau_{k(t)}}^{t-1}\!\!\eta_j\bbg^{(j)}=\xx_i^{(\tau_{k(t)})}-\!\!\!\sum_{j=\tau_{k(t)}}^{t-1}\!\!\eta_j\bbg^{(j)}.
		\end{align*}
		If we substitute the above relations into $V_t$, we get 
		\begin{align*}
			nV_t &= \E\sum_{i = 1}^n\|\xx_i^{(t)}-\bbx^{t}\|^2\cr 
			&=\sum_{i = 1}^n\E\|\!\!\sum_{j=\tau_{k(t)}}^{t-1}\!\!\eta_j\nabla F_i(\bx_i^{(j)},\xi_i^{(j)})-\!\!\sum_{j=\tau_{k(t)}}^{t-1}\!\!\eta_j\bbg^{(j)}\|^2\cr 
			&\le\sum_{i = 1}^n\E\|\sum_{j=\tau_{k(t)}}^{t-1}\eta_j\nabla F_i(\bx_i^{(j)},\xi_i^{(j)})\|^2\\
			&\le (t-\tau_{k(t)}) \sum_{i = 1}^n\sum_{j=\tau_{k(t)}}^{t-1}\eta_j^2\E\|\nabla F_i(\bx_i^{(j)},\xi_i^{(j)})\|^2\cr 
			&\le n(t-\tau_{k(t)})\sum_{j=\tau_{k(t)}}^{t-1}\eta_j^2(3L^2V_j+6Le_j+3\bar{\sigma}^2)\\
			&\le nH_{k(t)+1}\sum_{j=\tau_{k(t)}}^{t-1}\eta_j^2(3L^2V_j+6Le_j+3\bar{\sigma}^2).
		\end{align*}
		where the first inequality uses Lemma \ref{eq:algebra}, and the third inequality follows from Lemma \ref{eq:gradnorm}. Moreover, by our choice of step-size $\eta_j\le \frac{1}{3LH_{k(t)+1}}, \forall j\ge \tau_{k(t)}$. Thus, for any time instance in the interval $[\tau_{k(t)}, \tau_{k(t)+1})$, we have shown that 
		\begin{align}\nonumber
			V_t\!\le\! \frac{1}{3H_{k(t)+1}}\!\sum_{j=\tau_{k(t)}}^{t-1}\!\!V_j\!+\!H_{k(t)+1}\!\!\!\sum_{j=\tau_{k(t)}}^{t-1}\!\!\eta_j^2(6Le_j\!+\!3\bar{\sigma}^2).
		\end{align}
		By recursively unrolling $V_j, j=\tau_{k(t)},\ldots,t-1$, and noting that $V_{\tau_{k(t)}}=0$, we obtain 
		\begin{align*}
			&V_t\le \frac{1}{3H_{k(t)+1}}V_{t-1}+\frac{1}{3H_{k(t)+1}}\sum_{j=\tau_{k(t)}}^{t-2}V_j+H_{k(t)+1}\sum_{j=\tau_{k(t)}}^{t-1}\eta_j^2(6Le_j+3\bar{\sigma}^2)\cr 
			\leq& \big((\frac{1}{3H_{k(t)+1}})^2+\frac{1}{3H_{k(t)+1}}\big)\!\sum_{j=\tau_{k(t)}}^{t-2}\!\!V_j+(1+\frac{1}{3H_{k(t)+1}})H_{k(t)+1}\!\!\!\sum_{j=\tau_{k(t)}}^{t-1}\!\!\eta_j^2(6Le_j\!+\!3\bar{\sigma}^2)\le \cdots\cr  
			\le& (1\!+\!\frac{1}{3H_{k(t)+1}})^{t-\tau_{k(t)}}H_{k(t)+1}\!\!\!\!\sum_{j=\tau_{k(t)}}^{t-1}\!\!\!\eta_j^2(6Le_j\!+\!3\bar{\sigma}^2)\cr 
			\le& (1\!+\!\frac{1}{3H_{k(t)+1}})^{H_{k(t)+1}}H_{k(t)+1}\!\!\!\!\sum_{j=\tau_{k(t)}}^{t-1}\!\!\!\eta_j^2(6Le_j\!+\!3\bar{\sigma}^2).
		\end{align*}
		Finally, by replacing $(1+\frac{1}{3H_{k(t)+1}})^{H_{k(t)+1}}\leq 2$ into the above relation we obtain the desired bound.  
	\end{proof}
	
	\begin{proof}[{\bf Proof of Lemma \ref{lem:dec_nonconvex}}]Using Taylor expansion and the $L$-smoothness assumption, we can write 
		\begin{align}\label{eq:descent-nonconvex}
			e_{t+1}&=\E f(\bbx^{(t+1)})-f(\bx^*)\cr 
			&=\E f\big(\bbx^{(t)}-\eta_t\bbg^{(t)})\big)-f(\bx^*)\cr
			&\le \big(\E f(\bbx^{(t)})\!-\!f(\bx^*)\big)+\frac{L}{2}\eta_t^2\E \|\bbg^{(t)}\|^2-\eta_t \E \langle\nabla f(\bbx^{(t)}),\bbg^{(t)})\rangle,
		\end{align}
		where we recall that $\bbg^{(t)}=\frac{1}{n}\sum_{i = 1}^n\nabla F_i(\bx_i^{(t)},\xi_i^{(t)})$. 
		Next, we bound the second and third terms in~\eqref{eq:descent-nonconvex}. To bound the third term, using Assumption \ref{a:nonconvex}, we have
		\begin{align}\label{eq:nonconvex-descent-third}
			&\E \langle\nabla f(\bbx^{(t)}),\bbg^{(t)})\rangle\cr 
			=&\E \langle\nabla f(\bbx^{(t)}),\frac{1}{n}\!\sum_{i = 1}^n\!\nabla F_i(\bx_i^{(t)},\xi_i^{(t)})\rangle\cr 
			=&\E \big[\mathbb{E}_{\{\xi_i^{(t)}\}}\big[\langle\nabla f(\bbx^{(t)}),\frac{1}{n}\!\sum_{i = 1}^n\!\nabla F_i(\bx_i^{(t)},\xi_i^{(t)})\big\rangle|\mathcal{F}^t\big]\big]\cr 
			=&\E \big[\langle\nabla f(\bbx^{(t)}),\frac{1}{n}\sum_{i = 1}^n \mathbb{E}_{\xi_i^{(t)}}[\nabla F_i(\bx_i^{(t)},\xi_i^{(t)})|\mathcal{F}^t]\rangle\big]\cr 
			=&\E \langle\nabla f(\bbx^{(t)}),\frac{1}{n}\sum_{i = 1}^n\nabla f_i(\bx_i^{(t)})\rangle\cr  
			=&\E \langle\nabla f(\bbx^{(t)}),\frac{1}{n}\sum_{i=1}^n(\nabla f_i(\bx_i^{(t)})-\nabla f_i(\bbx^{(t)})\rangle+\E \|\nabla f(\bbx^{(t)})\|^2\cr
			\ge& \frac{1}{2}\E \|\nabla f(\bbx^{(t)})\|^2-\frac{1}{2n}\sum_{i = 1}^n\E\|\nabla f_i(\bx_i^{(t)})-\nabla f_i(\bbx^{(t)})\|^2\cr
			\ge&\frac{1}{2}\E \|\nabla f(\bbx^{(t)})\|^2-\frac{L^2}{2n}\sum_{i = 1}^n\E\|\bbx^{(t)}-\bx_i^{(t)}\|^2\cr
			=&\frac{1}{2}h_t-\frac{L^2}{2}V_t,
		\end{align}
		where the last inequality is by $L$-smoothness assumption. To bound the second term in~\eqref{eq:descent-nonconvex}, we have
		\begin{align*}
			&\E\|\bbg^{(t)}\|^2=\E\|\frac{1}{n}\sum_{i = 1}^n\nabla F_i(\bx_i^{(t)},\xi_i^{(t)})\|^2\cr 
			=&\E\|\frac{1}{n}\sum_{i = 1}^n(\nabla F_i(\bx_i^{(t)},\xi_i^{(t)})-\nabla f_i(\bx_i^{(t)})) \|^2+\E\|\frac{1}{n}\sum_{i = 1}^n\nabla f_i(\bx_i^{(t)})\|^2\\
			=&\frac{1}{n^2}\sum_{i = 1}^n\E\|\nabla F_i(\bx_i^{(t)},\xi_i^{(t)})-\nabla f_i(\bx_i^{(t)})\|^2+\E\|\frac{1}{n}\sum_{i = 1}^n(\nabla f_i(\bx_i^{(t)})-\nabla f_i(\bbx^{(t)}))+\nabla f(\bbx^{(t))})\|^2.
		\end{align*}	
		Thus, using the bounded noise Assumption \ref{a:nonconvex}, we get
		\begin{align}\label{eq:nonconvex-descent-second}
			\E\|\bbg^{(t)}\|^2\le& \frac{\sigma^2}{n}+2\E \|\nabla f(\bbx^{(t)})\|^2+\frac{2}{n}\sum_{i = 1}^n\E\|\nabla f_i(\bx_i^{(t)})-\nabla f_i(\bbx^{(t)})\|^2\cr
			\le& \frac{\sigma^2}{n}+2\E \|\nabla f(\bbx^{(t)})\|^2+\frac{2L^2}{n}\sum_{i = 1}^n\E\|\bbx^{(t)}-\bx_i^{(t)}\|^2\cr
			=&\frac{\sigma^2}{n}+2h_t+2L^2V_t,
		\end{align}
		where the second inequality holds by the $L$-smooth assumption. Finally, by substituting~\eqref{eq:nonconvex-descent-third} and~\eqref{eq:nonconvex-descent-second} into~\eqref{eq:descent-nonconvex}, we obtain
		\begin{align*}
			e_{t+1}&\le e_t-\eta_t(\frac{1}{2}h_t-\frac{L^2}{2}V_t)+\frac{L\eta_t^2}{2}(\frac{\sigma^2}{n}+2h_t+2L^2V_t)\\
			&\le e_t-\frac{\eta_t}{4}h_t+\frac{L\sigma^2}{2n}\eta_t^2+L^2\eta_tV_t,
		\end{align*}
		where the last inequality holds because $\eta_t\le \frac{1}{4L}$. 
	\end{proof}
	
	We first establish the following technical lemma to prove the consensus descent lemma for the nonconvex functions.
	
	\begin{lemma}\label{lemm:intermediate-nonconvex}
		Let Assumptions \ref{a:nonconvex} hold. Then,
		\begin{align}\nonumber
			\frac{1}{n}\!\sum_{i = 1}^n\E\|\nabla F_i(\xx_i^{(t)}\!, \xi_i^{(t)})\|^2\!\le\! 3(L^2V_t\!+\!B^2h_t\!+\!\sigma^2\!+\!G^2).
		\end{align}
	\end{lemma}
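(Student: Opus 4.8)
The plan is to decompose the stochastic gradient evaluated at agent $i$'s local iterate into three natural pieces and bound each one separately. First I would write
\[
\nabla F_i(\xx_i^{(t)},\xi_i^{(t)})=\big(\nabla F_i(\xx_i^{(t)},\xi_i^{(t)})-\nabla f_i(\xx_i^{(t)})\big)+\big(\nabla f_i(\xx_i^{(t)})-\nabla f_i(\bbx^{(t)})\big)+\nabla f_i(\bbx^{(t)}),
\]
take squared norms, average over $i\in[n]$, take expectations, and apply the elementary bound $\norm{a+b+c}^2\le 3\norm{a}^2+3\norm{b}^2+3\norm{c}^2$. This reduces the claim to bounding three averaged terms by $\sigma^2$, $L^2V_t$, and $G^2+B^2h_t$ respectively.

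For the first term, conditioning on the filtration $\F^t$ (so that $\xx_i^{(t)}$ is $\F^t$-measurable) and using the bounded-variance part of Assumption~\ref{a:nonconvex}, we get $\frac1n\sum_{i=1}^n\E\norm{\nabla F_i(\xx_i^{(t)},\xi_i^{(t)})-\nabla f_i(\xx_i^{(t)})}^2\le\sigma^2$. For the second term, $L$-smoothness of each $f_i$ gives $\frac1n\sum_{i=1}^n\E\norm{\nabla f_i(\xx_i^{(t)})-\nabla f_i(\bbx^{(t)})}^2\le\frac{L^2}{n}\sum_{i=1}^n\E\norm{\xx_i^{(t)}-\bbx^{(t)}}^2=L^2V_t$, using the definition of $V_t$. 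For the third term, applying the $(G,B)$-BGD condition of Assumption~\ref{a:nonconvex} at the point $\bbx^{(t)}$ yields $\frac1n\sum_{i=1}^n\E\norm{\nabla f_i(\bbx^{(t)})}^2\le G^2+B^2\E\norm{\nabla f(\bbx^{(t)})}^2=G^2+B^2h_t$, using the definition of $h_t$. Summing the three bounds each multiplied by $3$ gives exactly $3(L^2V_t+B^2h_t+\sigma^2+G^2)$, as claimed.

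I do not expect any genuine obstacle here; this is the nonconvex counterpart of Lemma~\ref{eq:gradnorm} and the argument is a routine three-way split. The only point warranting a little care is the interplay of conditional expectation in the variance bound for the first term, which is handled exactly as in the proof of Lemma~\ref{lem:dec_nonconvex}.
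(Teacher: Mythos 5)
Your proposal is correct and uses exactly the same three-way decomposition as the paper's proof, with the same three bounds ($\sigma^2$ from bounded variance, $L^2V_t$ from $L$-smoothness, and $G^2+B^2h_t$ from the $(G,B)$-BGD condition). No meaningful differences.
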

	\begin{proof}
		We can write,
		\begin{align*}
			&\frac{1}{n}\sum_{i = 1}^n\E\|\nabla F_i(\xx_i^{(t)}\!, \xi_i^{(t)})\|^2\\
			=&\frac{1}{n}\sum_{i = 1}^n\E\big\|\nabla F_i(\xx_i^{(t)}\!, \xi_i^{(t)})-\nabla f_i(\bx_i^{(t)})+\nabla f_i(\bx_i^{(t)})-\nabla f_i(\bbx^{(t)})+\nabla f_i(\bbx^{(t)})\big\|^2\\
			\le& \frac{3}{n}\sum_{i = 1}^n\E\|\nabla F_i(\xx_i^{(t)}\!, \xi_i^{(t)})\!-\!\nabla f_i(\bx_i^{(t)})\|^2\!+\!\frac{3}{n}\sum_{i = 1}^n\E\|\nabla f_i(\bx_i^{(t)})\!-\!\nabla f_i(\bbx^{(t)})\|^2\!+\!\frac{3}{n}\sum_{i = 1}^n\E\|\nabla f_i(\bbx^{(t)})\|^2\\
			\le& 3\sigma^2+3L^2V_t+3(G^2+B^2h_t).
		\end{align*}
		where the last inequality is obtained using the $L$-smoothness assumption and Assumption \ref{a:nonconvex}.   
	\end{proof}

	\begin{proof}[{\bf Proof of Lemma \ref{lem:consensus_nonconvex}}]
		By following the same steps as in the proof of Lemma \ref{lem:consensus}, we can write
		\begin{align*}
			nV_t &= \E\sum_{i = 1}^n\|\xx_i^{(t)}-\bbx^{t}\|^2\cr
			& \le (t-\tau_{k(t)}) \sum_{i = 1}^n\sum_{j=\tau_{k(t)}}^{t-1}\eta_j^2\E\|\nabla F_i(\bx_i^{(j)},\xi_i^{(j)})\|^2\\
			&\le n(t-\tau_{k(t)})\sum_{j=\tau_{k(t)}}^{t-1}3\eta_j^2\big(L^2V_j+B^2h_j+\sigma^2+G^2\big)\\
			&\le nH_{k(t)+1}\sum_{j=\tau_{k(t)}}^{t-1}3\eta_j^2\big(L^2V_j+B^2h_j+\sigma^2+G^2\big)\cr 
			&\le nH_{k(t)+1}\sum_{j=\tau_{k(t)}}^{t-1}3\eta_j^2\big(L^2V_j+B^2h_j+\sigma^2+G^2\big),
		\end{align*}
		where in the second inequality, we have used Lemma \ref{lemm:intermediate-nonconvex}. Since by the choice of step size we may assume $\eta_j\le \frac{1}{3LH_{k(t)+1}}$, for any time instance in the time interval $[\tau_{k(t)}, \tau_{k(t)+1})$, we have shown that  
		\begin{align}\nonumber
			V_t\leq \frac{1}{3H_{k(t)+1}}\sum_{j=\tau_{k(t)}}^{t-1}V_j +H_{k(t)+1}\sum_{j=\tau_{k(t)}}^{t-1}3\eta_j^2\big(B^2h_j+\sigma^2+G^2\big).
		\end{align}
		Finally, if we recursively unroll $V_j, j=\tau_{k(t)},\ldots,t-1$ as in the proof of Lemma \ref{lem:consensus}, we obtain
		\begin{align*}
			V_t&\le (1+\frac{1}{3H_{k(t)+1}})^{H_{k(t)+1}}\times  H_{k(t)+1}\sum_{j=\tau_{k(t)}}^{t-1}3\eta_j^2\big(B^2h_j+\sigma^2+G^2\big)\\
			&\le H_{k(t)+1}\sum_{j=\tau_{k(t)}}^{t-1}6\eta_j^2\big(B^2h_j+\sigma^2+G^2\big).
		\end{align*}
	\end{proof}
	
	\begin{lemma}\label{eq:algebra}
		Let $\bbx=\frac{1}{n}\sum_{i = 1}^n\bx_i$. Then, for any $\bx'\in \R^d$, $\sum_{i = 1}^n\|\bx_i-\bx'\|^2 = \sum_{i = 1}^n\|\bx_i-\bbx\|^2+n\|\bbx-\bx'\|^2.$ In particular, $\sum_{i = 1}^n\|\bx_i-\bbx\|^2\le \sum_{i = 1}^n\|\bx_i\|^2$.
	\end{lemma}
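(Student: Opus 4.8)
The plan is to prove the identity by the standard bias--variance style decomposition obtained from inserting $\pm\,\bbx$. First I would write, for each $i$, $\bx_i-\bx' = (\bx_i-\bbx)+(\bbx-\bx')$ and expand the squared Euclidean norm using $\norm{\bu+\bv}^2=\norm{\bu}^2+2\lin{\bu,\bv}+\norm{\bv}^2$, which gives $\norm{\bx_i-\bx'}^2 = \norm{\bx_i-\bbx}^2 + 2\lin{\bx_i-\bbx,\,\bbx-\bx'} + \norm{\bbx-\bx'}^2$.

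Next I would sum this identity over $i=1,\dots,n$. The first and third terms on the right directly produce $\sum_{i=1}^n\norm{\bx_i-\bbx}^2$ and $n\norm{\bbx-\bx'}^2$, respectively. The only point requiring a (trivial) argument is that the cross term vanishes: since $\bbx-\bx'$ does not depend on $i$, we have $\sum_{i=1}^n\lin{\bx_i-\bbx,\,\bbx-\bx'} = \lin{\sum_{i=1}^n(\bx_i-\bbx),\,\bbx-\bx'} = \lin{\,n\bbx-n\bbx,\,\bbx-\bx'\,} = 0$, using the definition $\sum_{i=1}^n \bx_i = n\bbx$. This yields the claimed equality $\sum_{i=1}^n\norm{\bx_i-\bx'}^2 = \sum_{i=1}^n\norm{\bx_i-\bbx}^2 + n\norm{\bbx-\bx'}^2$.

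Finally, for the ``in particular'' statement I would simply specialize to $\bx'=\mathbf{0}$, obtaining $\sum_{i=1}^n\norm{\bx_i}^2 = \sum_{i=1}^n\norm{\bx_i-\bbx}^2 + n\norm{\bbx}^2 \ge \sum_{i=1}^n\norm{\bx_i-\bbx}^2$, since $n\norm{\bbx}^2\ge 0$. There is no real obstacle in this lemma; the only thing to be careful about is the cancellation of the cross term, which rests entirely on $\bbx$ being the arithmetic mean of the $\bx_i$, and everything else is an elementary expansion.
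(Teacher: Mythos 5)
Your proof is correct and follows essentially the same route as the paper: insert $\pm\,\bbx$, expand the square, observe that the cross term vanishes because $\sum_{i=1}^n(\bx_i-\bbx)=0$, and take $\bx'=\mathbf{0}$ for the final inequality. (If anything, your version is slightly cleaner, since the paper's displayed expansion drops the factor of $2$ on the cross term, which is harmless only because that term is zero.)
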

	\begin{proof}
		We have,
		\begin{align*}
			&\sum_{i = 1}^n\|\bx_i-\bx'\|^2=\sum_{i = 1}^n\|\bx_i-\bbx+\bbx-\bx'\|^2\\
			&=\sum_{i = 1}^n\|\bx_i-\bbx\|^2\!+\!n\|\bbx-\bx'\|^2\!-\!\!\sum_{i = 1}^n\langle \bx_i-\bbx,\bbx-\bx'\rangle\cr 
			&=\sum_{i = 1}^n\|\bx_i-\bbx\|^2\!+\!n\|\bbx-\bx'\|^2.
		\end{align*}
		The second inequality holds by choosing $\bx'=0$.
	\end{proof}

	\begin{lemma}\label{prop:comvex,smooth}
		Let $f$ be a $L$-smooth convex function. Then, for any $\bx,\by \in \R^d$, we have
		\begin{align}\nonumber
			&f(\bx)-f(\by) + \frac{L}{2}\norm{\bx-\by}^2_2 \ge \lin{\nabla f(\bx),\bx-\by},\cr 
			&\|\nabla\! f(\bx)\!-\!\nabla\! f(\by)\|^2\!\le\! 2L(f(\by)\!-\!f(\bx)\!-\!\langle\nabla\! f(\bx),\by \!-\!\bx \rangle). 
		\end{align}
	\end{lemma}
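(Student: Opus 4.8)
The plan is to deduce both inequalities from the single fundamental fact that an $L$-smooth function obeys the \emph{descent lemma}: for all $\bx,\by\in\R^d$,
\begin{align}\nonumber
f(\by)\le f(\bx)+\langle\nabla f(\bx),\by-\bx\rangle+\frac{L}{2}\|\by-\bx\|^2 .
\end{align}
This follows by writing $f(\by)-f(\bx)=\int_0^1\langle\nabla f(\bx+s(\by-\bx)),\by-\bx\rangle\,ds$, adding and subtracting $\langle\nabla f(\bx),\by-\bx\rangle$, and bounding the remainder by $L$-Lipschitzness of $\nabla f$; convexity is not needed for this step.

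\textbf{First inequality.} I would simply rearrange the descent lemma. From the display above, $f(\bx)-f(\by)\ge-\langle\nabla f(\bx),\by-\bx\rangle-\frac{L}{2}\|\bx-\by\|^2=\langle\nabla f(\bx),\bx-\by\rangle-\frac{L}{2}\|\bx-\by\|^2$, and moving the quadratic term to the left-hand side gives exactly the claimed bound. (In particular this first estimate uses only smoothness, not convexity.)

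\textbf{Second inequality.} This is where convexity enters, via the standard ``shift-to-the-minimizer'' trick. Fix $\bx$ and set $g(\bz):=f(\bz)-\langle\nabla f(\bx),\bz\rangle$. Then $g$ is convex (an affine term was subtracted), $L$-smooth (same second-order behaviour as $f$), and $\nabla g(\bz)=\nabla f(\bz)-\nabla f(\bx)$, so $\nabla g(\bx)=0$; by convexity $\bx$ is a global minimizer of $g$. Applying the descent lemma to $g$ at $\bz$ with the displacement $-\tfrac1L\nabla g(\bz)$ gives $g\big(\bz-\tfrac1L\nabla g(\bz)\big)\le g(\bz)-\tfrac1{2L}\|\nabla g(\bz)\|^2$, and since $g(\bx)\le g\big(\bz-\tfrac1L\nabla g(\bz)\big)$ we obtain $\tfrac1{2L}\|\nabla g(\bz)\|^2\le g(\bz)-g(\bx)$. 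Substituting back $g(\bz)-g(\bx)=f(\bz)-f(\bx)-\langle\nabla f(\bx),\bz-\bx\rangle$ and $\nabla g(\bz)=\nabla f(\bz)-\nabla f(\bx)$, then taking $\bz=\by$, yields the second inequality.

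\textbf{Main obstacle.} There is essentially none: the only non-mechanical idea is recognizing that subtracting the linear functional $\langle\nabla f(\bx),\cdot\rangle$ converts $\bx$ into the minimizer of a convex $L$-smooth function, after which the elementary ``a gradient step decreases the value by at least $\tfrac1{2L}\|\nabla g\|^2$'' estimate closes the argument. Everything else is bookkeeping with the descent lemma.
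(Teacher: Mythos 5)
Your proof is correct and is essentially the paper's own argument: the first inequality is the rearranged descent lemma, and your auxiliary function $g(\bz)=f(\bz)-\lin{\nabla f(\bx),\bz}$ with the gradient step $\bz-\tfrac1L\nabla g(\bz)$ is exactly the paper's choice of the point $\bz=\by-\tfrac1L(\nabla f(\by)-\nabla f(\bx))$, combined with the same convexity lower bound from $\bx$ and smoothness upper bound from $\by$. The repackaging via $g$ is only cosmetic, so no substantive difference to report.
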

	\begin{proof}
		The first inequality is an immediate consequence of the $L$-smoothness property. To show the second inequality, let us define $\bz  = \by -\frac{1}{L}(\nabla f(\by)-\nabla f(\bx))$. Then,
		\begin{align*}
			&f(\bz)\ge f(\bx)+\lin{\nabla f(\bx),\bz-\bx},\\
			&f(\bz)\le f(\by)+\lin{\nabla f(\by),\bz-\by}+\frac{L}{2}\|\by -\bz\|^2.
		\end{align*}
		Therefore,
		\begin{align*}
			&f(\bx)+\Big\langle\nabla f(\bx),\by -\frac{1}{L}(\nabla f(\by)-\nabla f(\bx))-\bx\Big\rangle\cr
			&\qquad\le f(\by)-\Big\langle\nabla f(\by), \frac{1}{L}(\nabla f(\by)-\nabla f(\bx))\Big\rangle\cr 
			&\qquad+\frac{1}{2L}\| (\nabla f(\by)-\nabla f(\bx))\|^2.
		\end{align*}
		Rearranging the terms completes the proof.
	\end{proof}
	
		\subsection{Choice of $\beta$ in Example~\ref{exp:2}}
		Here we prove that in Example~\ref{exp:2}, we can choose $\beta =  a\lceil \frac{24L}{\mu}\rceil^s\cdot \frac{12L}{\mu}+1$ in order to satisfy condition 1) in Corollary~\ref{corr:strongly-convex}, i.e. $H_i\le \frac{\mu (\beta + \sum_{j = 1}^{i-1}H_j)}{12L},\ \forall i$. Since $a = \mathcal{O}(n^{-\frac{s+1}{2}}T^{\frac{1-s}{2}})$, the overall convergence rate is still $\mathcal{O}(\frac{1}{nT})$.
		
		\begin{proof}
			Let $k = \lceil \frac{24L}{\mu}\rceil$, then $\beta \ge H_k\cdot \frac{12L}{\mu}+1$. For all $i\le k$, we have 
			\begin{align*}
				H_i\le H_k<\frac{\mu\beta}{12L}<\frac{\mu (\beta + \sum_{j = 1}^{i-1}H_j)}{12L}.
			\end{align*}
			For all $k\le i\le T$, we would prove by induction that $a\cdot i^s\le \frac{\mu (\beta + \sum_{j = 1}^{i-1}H_j)}{12L}$, thus concluding the proof.
			
			In fact, for the base case $i=k$, we have 
			\begin{align*}
				a\cdot \lceil \frac{24L}{\mu}\rceil^s<\frac{\mu\beta}{12L}<\frac{\mu (\beta + \sum_{j = 1}^{k-1}H_j)}{12L}.
			\end{align*}
			For inductive step, assume for  some $k\le i\le T$, we have $a\cdot i^s\le \frac{\mu (\beta + \sum_{j = 1}^{i-1}H_j)}{12L}$, then 
			\begin{align*}
				&a\cdot (i+1)^s\le \frac{\mu (\beta + \sum_{j = 1}^{i}H_j)}{12L}\\
				\Leftarrow& a\cdot (i+1)^s-a\cdot i^s \le \frac{\mu H_i}{12L}\\
				\Leftarrow& a\cdot (i+1)^s-a\cdot i^s \le \frac{\mu a\cdot i^s}{24L}\\
				\Leftarrow& (\frac{i+1}{i})^s\le \frac{\mu}{24L}+1\\
				\Leftarrow& i\ge \frac{1}{(\frac{\mu}{24L}+1)^{\frac{1}{s}}-1}\\
				\Leftarrow& i\ge \frac{1}{1+\frac{\mu}{12Ls}-1}\\
				\Leftarrow& i\ge \lceil \frac{24L}{\mu}\rceil = k.
			\end{align*}
			By induction we conclude that for all $k\le i\le T$, we also have $H_i\le a\cdot i^s\le \frac{\mu (\beta + \sum_{j = 1}^{i-1}H_j)}{12L}$.
		\end{proof}

\end{document}